\newcommand{\ba}[1]{\begin{align}#1\end{align}}
\newcommand{\beq}[1]{\begin{equation}#1\end{equation}}
\newcommand{\given}{\,|\,}
\def\Figref#1{Figure~\ref{#1}}
\def\eqref#1{equation~\ref{#1}}
\def\Eqref#1{(\ref{#1})}
\def\1{\bm{1}}
\def\rvc{{\mathbf{c}}}
\def\rvx{{\mathbf{x}}}
\def\rvz{{\mathbf{z}}}
\def\vzero{{\bm{0}}}
\def\mI{{\bm{I}}}
\DeclareMathAlphabet{\mathsfit}{\encodingdefault}{\sfdefault}{m}{sl}
\SetMathAlphabet{\mathsfit}{bold}{\encodingdefault}{\sfdefault}{bx}{n}
\def\gL{{\mathcal{L}}}
\def\sR{{\mathbb{R}}}
\newcommand{\E}{\mathbb{E}}
\newcommand{\KL}{D_{\mathrm{KL}}}
\definecolor{Gray}{gray}{0.93}
\theoremstyle{plain}
\newtheorem{theorem}{Theorem}%
\theoremstyle{definition}
\theoremstyle{remark}
\title{Truncated Diffusion Probabilistic Models and Diffusion-based Adversarial Auto-Encoders}
\author{Huangjie Zheng$^{1,2}$ \& Pengcheng He$^2$ \& Weizhu Chen$^2$ \& Mingyuan Zhou$^1$ %
\\
The University of Texas at Austin$^1$, Microsoft Azure AI$^2$\\
\texttt{huangjie.zheng@utexas.edu, penhe@microsoft.com,}\\\texttt{wzchen@microsoft.com, mingyuan.zhou@mccombs.utexas.edu} \\
}
\begin{document}

\maketitle

\begin{abstract}
Employing a forward diffusion chain to gradually map the data to a  noise distribution, diffusion-based generative models learn how to generate the data by inferring a reverse diffusion chain.
However, this approach is slow and costly because it needs many forward and reverse steps. We propose a faster and cheaper approach that adds noise not until the data become pure random noise, but until they reach a hidden noisy-data distribution that we can confidently learn. Then, we use fewer reverse steps to generate data by starting from this hidden distribution that is made similar to the noisy data. We reveal that the proposed model can be cast as an adversarial auto-encoder empowered by both the diffusion process and a learnable implicit prior. Experimental results show even with a significantly smaller number of reverse diffusion steps, the proposed truncated diffusion probabilistic models can provide consistent improvements over the non-truncated ones in terms of performance in both unconditional and text-guided image generations. %

\end{abstract}

\section{Introduction}\label{sec:intro}

Generating photo-realistic images with  probabilistic models is a challenging and important task in machine learning and computer vision, with many potential applications in data augmentation, image editing, style transfer, \textit{etc}.
Recently, a new class of image generative models based on diffusion processes \citep{diffusion} has achieved remarkable results on various 
commonly used image generation benchmarks~\citep{scorematching,ddpm,improvedscore,song2021scorebased,Dhariwal2021DiffusionMB}, surpassing many existing deep generative models, such as autoregressive models \citep{pixelcnn}, variational auto-encoders (VAEs)~\citep{kingma2013auto,rezende2014stochastic,van2017neural,razavi2019generating}, and generative adversarial networks (GANs)~\citep{goodfellow2014generative,radford2015unsupervised,arjovsky2017wasserstein,miyato2018spectral,brock2018large,karras2019style,karras2020analyzing}. 

This new modeling class, which includes both  score-based %
and diffusion-based generative models, %
uses noise injection to gradually corrupt the data distribution into a simple noise distribution that can be easily sampled from, and then uses a denoising network to reverse the noise injection to generate photo-realistic images.  From the perspective of score matching \citep{hyvarinen2005estimation,vincent2011connection} and Langevin dynamics \citep{neal2011mcmc,welling2011bayesian}, the denoising network is trained by matching the score function, which is the gradient of the log-density of the data, of the corrupted data distribution and that of the generator distribution at different noise levels \citep{scorematching}. This training objective can also be formulated under diffusion-based generative models \citep{diffusion,ddpm}. 
These two types of models
have been further unified by \citet{song2021scorebased} under the framework of discretized stochastic differential equations.

Despite their impressive performance, diffusion-based (or score-based) generative models suffer from high computational costs, both in training and sampling. This is because they need to perform a large number of diffusion steps, typically hundreds or thousands, to ensure that the noise injection is small enough at each step to make the assumption 
that both the diffusion and denoising processes have the Gaussian form  hold in the limit of small diffusion rate~\citep{feller1949theory,diffusion}.
In other words,  when the number of diffusion steps is small or when the rate is large, the Gaussian assumption may not hold well, and the model may not be able to capture the true score function of the data. Therefore, previous works have tried to reduce the number of diffusion steps by using non-Markovian reverse processes \citep{ddim, kong2021fast}, adaptive noise scheduling \citep{san2021noise,kingma2021variational}, knowledge distillation \citep{luhman2021knowledge,salimans2022progressive}, diffusing in a lower-dimension latent space \citep{rombach2022high},  \textit{etc.}, but they still cannot achieve significant speedup without sacrificing the generation quality.

In this paper, we propose a novel way to shorten the diffusion trajectory by learning an implicit distribution to start the reverse diffusion process, instead of relying on a tractable noise distribution. We call our method truncated diffusion probabilistic modeling (TDPM), which is based on the idea of truncating the forward diffusion chain of an existing diffusion model, such as the denoising diffusion probabilistic model (DDPM) of \citet{ddpm}.  To significantly accelerate diffusion-based text-to-image generation, we also introduce the truncated latent diffusion model (TLDM), which truncates the diffusion chain of the latent diffusion model (LDM) of \citet{rombach2022high}. We note LDM is the latent text-to-image diffusion model behind \href{https://stability.ai/blog/stable-diffusion-public-release}{\texttt{Stable Diffusion}}, an open-source project that provides state-of-the-art performance in generating photo-realistic images given text input.  
By truncating the chain, we can reduce the number of diffusion steps to an arbitrary level, but at the same time, we also lose the tractability of the distribution at the end of the chain. Therefore, we need to learn an implicit generative distribution that can approximate this distribution and provide the initial samples for the reverse diffusion process. We show that this implicit generative distribution can be implemented in different ways, such as using a separate generator network or reusing the denoising network. %
The former option has more flexibility and can improve the generation quality, while the latter option has no additional parameters and can achieve comparable results.

\begin{figure*}[t]
    \centering
    \includegraphics[width=\textwidth]{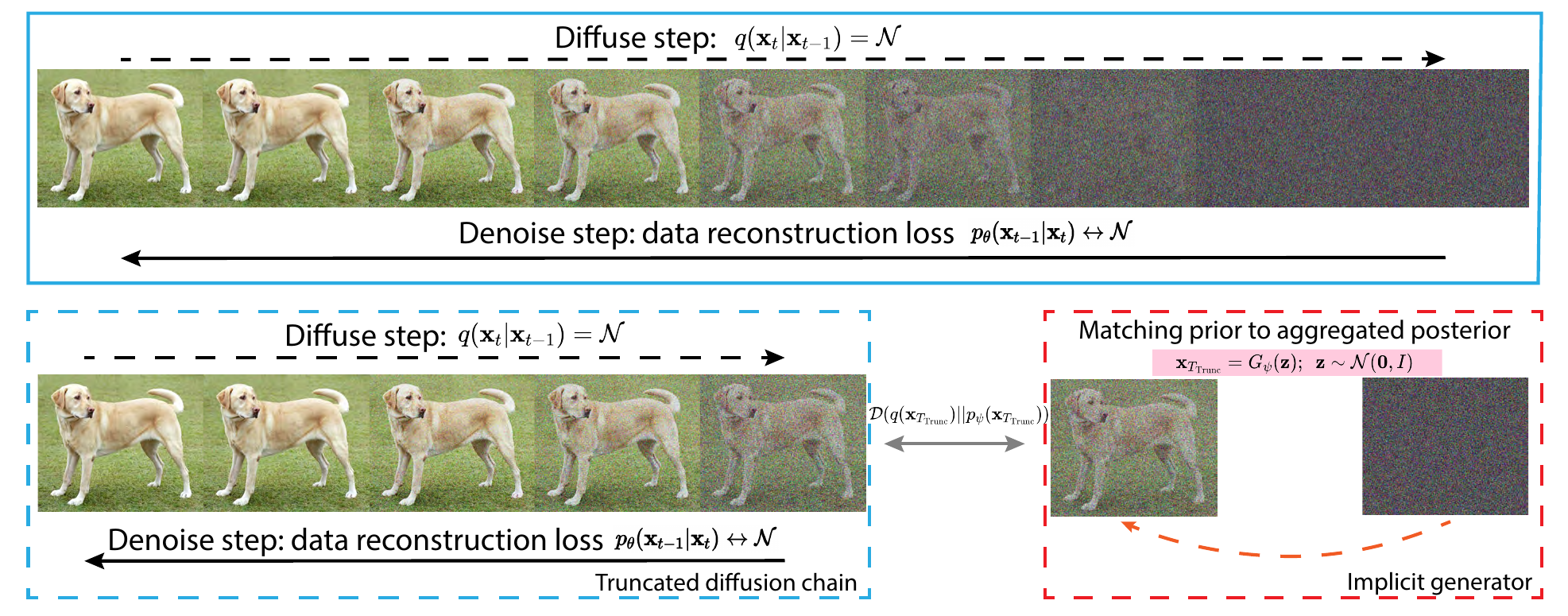}
    \vspace{-8mm}
    \caption{{\small (\textit{Best viewed in color}) An illustrative depiction of diffusion models and our truncated diffusion models.  \textbf{\textit{Top}}: The conventional denoising diffusion models add Gaussian noise gradually with a large number of time steps, where the true posterior can be kept close to Gaussian and hence easy to fit with denoising (score-matching) loss (marked in a solid blue box). \textbf{\textit{Bottom}}: Truncated diffusion models truncate the diffusion chain to keep its first few steps and small diffusion segment (marked in the dashed blue box). This truncated diffusion chain can still be learned with previous denoising methods. Meanwhile, as the left part is truncated, the Gaussian prior $p(\rvx_T)$ will have a large gap to the truncated point $q(\rvx_T \given \rvx_0)$, which is bridged with an implicit generative distribution $p_\psi(\rvx_T) = \int p_\psi(\rvx_T|\rvz)p(\rvz)d\rvz$ (marked in dashed red box).}}
    \label{fig:motivation}
    \vspace{-5mm}
\end{figure*}

We reveal that DDPM and VAE have a similar relationship as TDPM and adversarial auto-encoder (AAE, \citet{makhzani2015adversarial}). Specifically, DDPM is like a VAE with a fixed encoder and a learnable decoder that use a diffusion process, and a predefined prior. TDPM is like an AAE with a fixed encoder and a learnable decoder that use a truncated diffusion process, and a learnable implicit prior.

{Our truncation method has several advantages when we use it to modify DDPM for generating images without text guidance or LDM for generating images with text guidance. First, it can generate samples much faster by using fewer diffusion steps, without sacrificing or even enhancing the generation quality. Second, it can exploit the cooperation between the implicit model and the diffusion model, as the diffusion model helps the implicit model train by providing noisy data samples, and the implicit model helps the diffusion model reverse by providing better initial samples. Third, it can adapt the truncation level to balance the generation quality and efficiency, depending on the data complexity and the computational resources. 
For generating images with text guidance, our method can speed up the generation significantly and make it suitable for real-time processing: while LDM takes the time to generate one photo-realistic image, our TLDM can generate more than 50 such images.}

The main contributions of our paper are as follows:
\begin{itemize}
    \item We introduce TDPM, a new diffusion-based generative model that can shorten the diffusion trajectory by learning an implicit distribution to start the reverse diffusion process, and demonstrate that the learning of the implicit distribution can be achieved in various ways. We further introduce TLDM to significantly accelerate diffusion-based text-to-image generation.
    \item We show TDPM can be formulated as a diffusion-based AAE.
    \item We show that the implicit distribution can be realized by reusing the denoising network for the reverse diffusion process, %
    which can reduce the reverse diffusion steps by orders of magnitude without adding any extra parameters and with comparable generation quality. 
    \item We reveal the synergy between the implicit model and the diffusion model, as the diffusion process can simplify the training of the implicit model like GANs, and the implicit model can speed up the reverse diffusion process of the diffusion model.
    \item We show that both TDPM and TLDM can adapt the truncation level, according to the data complexity and the computational resources, to achieve a good balance between the generation quality and the efficiency.
\end{itemize}

\section{Preliminaries on diffusion  models}\label{sec:background}
In Gaussian diffusion  models \citep{diffusion,ddpm}, starting from the data distribution $\rvx_0 \sim q(\rvx_0)$,  a pre-defined forward diffusion process $q_t$  produces auxiliary variables $\rvx_{t=1:T}$ by gradually adding Gaussian noise, with variance $\beta_t \in (0,1)$ at time $t$, as follows:
\ba{
 q(\rvx_1, ..., \rvx_T \given \rvx_0) &\coloneqq    \textstyle\prod_{t=1}^{T} q(\rvx_t \given \rvx_{t-1}) \label{eq:joint} ,~~~
    q(\rvx_t \given \rvx_{t-1}) \coloneqq \mathcal{N}(\rvx_t; \sqrt{1-\beta_t} \rvx_{t-1}, \beta_t \mI). %
}
With the limit of small diffusion rate ($i.e.$, $\beta_t$ is kept sufficiently small), %
the reverse distribution $q(\rvx_{t-1}\given \rvx_t)$  also follows a Gaussian distribution \citep{feller1949theory,diffusion} and can be approximated using a neural network parameterized Gaussian distribution $p_\theta$ as:
\beq{
\label{eq:nn}
p_{\theta}(\rvx_{t-1}\given \rvx_t) \coloneqq \mathcal{N}(\rvx_{t-1}; \mu_{\theta}(\rvx_t, t), \Sigma_{\theta}(\rvx_t, t)).
}
Moreover, with a  sufficiently large $T$, the outcome of the diffusion chain $\rvx_T$ will follow an isotropic Gaussian distribution. Thus, with the pre-defined forward (inference) diffusion process and the learned reverse (generative) diffusion process, we can sample from $\rvx_T \sim \mathcal{N}(\vzero, \mI)$ and run the diffusion process in reverse to get a sample from the data distribution $q(\rvx_0)$.

Under the variational inference \citep{kingma2013auto,blei2017variational} framework, viewing  $q(\rvx_1, ..., \rvx_T \given \rvx_0)$ in \Eqref{eq:joint} as the inference network, we can use the evidence lower bound (ELBO) as our learning objective. Following previous works \citep{diffusion,ddpm}, the negative ELBO of a diffusion probabilistic model, parameterized by $\theta$, can be expressed as
\ba{
    \gL_{\text{ELBO}}(\theta) &\coloneqq \gL_0(\theta) +   \textstyle \sum_{t=2}^T\gL_{t-1}(\theta) + \gL_T \label{eq:loss},~~~%
                     \gL_0(\theta) \coloneqq \E_{q(\rvx_{0})}\E_{q(\rvx_{1}\given \rvx_0)} \left[-\log p_{\theta}(\rvx_0 \given  \rvx_1)\right] %
                     , \\
                     \gL_{t-1}(\theta)&\coloneqq 
                    \E_{q(\rvx_{0})}\E_{q(\rvx_{t}\given \rvx_0)} [ \KL\left({q(\rvx_{t-1}\given \rvx_t,\rvx_0)}||{p_{\theta}(\rvx_{t-1}\given \rvx_t)}\right) \label{eq:losst}],~~~t\in\{2,\ldots,T\} \\
                     \gL_T&\coloneqq \E_{q(\rvx_{0})} [ \KL\left({q(\rvx_T \given  \rvx_0)}||{p(\rvx_T)}\right)] \label{eq:lossT},
}
where $\KL(q||p) = \E_{q}[\log q - \log p]$ denotes the Kullback--Leibler (KL) divergence from distributions $p$ to $q$. Generally speaking, diffusion probabilistic models assume the number of diffusion steps $T$ to be sufficiently large to satisfy two conditions: 1) the reverse distribution at each denoising step can be fitted with a Gaussian denoising generator $p_{\theta}(\rvx_{t-1}|\rvx_t)$; 2) with a sufficiently small diffusion rate $\beta_t$, the long forward diffusion process will successfully corrupt the data, making $q(\rvx_T\given \rvx_0) \approx \mathcal{N}(\vzero, \mI)$, and hence approximately $L_T$ becomes zero and depends on neither $\rvx_0$ nor~$\theta$. 

\textbf{What happens if $T$ is insufficiently large?}
Given a non-Gaussian data distribution $q(\rvx_0)$,
when the number of denoising steps is reduced, the true posterior $q(\rvx_{t-1}\given \rvx_t)$ is not Gaussian and usually intractable~\citep{feller1949theory}, resulting in new challenges to current diffusion models.
As noted in~\citet{xiao2021tackling}, when $\beta_t$ is not sufficiently small, the diffusion step becomes larger and the denoising distribution can be multi-modal and hence too complex to be well fitted by Gaussian.
The authors propose to define $p_{\theta}(\rvx_{t-1} \given \rvx_{t})$ with an implicit generator and substitute the ELBO with
\ba{\label{denoising gan}
\min _{\theta}    \textstyle \sum_{t \geq 1} \mathbb{E}_{q(t)}\left[ D_{\mathrm{adv}}\!\left(q(\rvx_{t-1} \given \rvx_{t}) \| p_{\theta}(\rvx_{t-1} \given \rvx_{t})\right)\right],
}
where $D_{\mathrm{adv}}$ represents a statistical distance that relies on an adversarial training setup. This modified objective can be minimized by leveraging the power of conditional GANs in fitting implicit multimodal distributions~\citep{arjovsky2017wasserstein, goodfellow2014generative, nowozin2016f-gan}.
While the concept of diffusion has been used,
the proposed models in \citet{xiao2021tackling} are shown to work the best only when the number of diffusion steps is limited to be as few as four, and start to exhibit deteriorated performance when further increasing that number.

\section{Truncated diffusion and adversarial auto-encoding }\label{sec:method}
We first introduce the idea of accelerating both the training and generation of diffusion models by truncating the diffusion chains and describe the technical challenges. We then develop the objective function and training algorithm for TDPM. We further reveal TDPM can also be formulated as an AAE \citep{makhzani2015adversarial}) empowered by diffusion models. While DDPM can be considered as a hierarchical version of a variational auto-encoder (VAE) with a fixed multi-stochastic-layer encoder, our derivation shows that TDPM can be considered as a hierarchical version of an AAE with a fixed multi-stochastic-layer encoder but a learnable implicit prior.

\subsection{Motivation and technical challenges}
{

{We propose a novel method called TDPM to speed up the diffusion process and the generative model. The main idea is to shorten the forward diffusion chain that transforms the data into Gaussian noise, and use a learned implicit distribution to sample the starting point of the reverse diffusion chain that reconstructs the data. 
To be more precise, we adopt the DDPM framework that defines a variance schedule $\{\beta_1, \beta_2, ..., \beta_T \}$, which controls the amount of noise added at each step of the forward diffusion process. The forward process has a simple analytical form as a Gaussian distribution:
\begin{align}\textstyle 
q(\rvx_t \given  \rvx_0) = \mathcal N(\sqrt{\bar \alpha_t} \rvx_0,  (1 - \bar \alpha_t) I ); \quad \bar{\alpha}_t = \prod_{i=1}^t \alpha_i, ~ \alpha_i = 1 - \beta_i \notag.
\end{align}
Here, $\rvx_t$ is the noisy version of the data $\rvx_0$ at step $t$, and $\bar{\alpha}_t$ is the cumulative product of the diffusion coefficients $\alpha_i$. The forward chain of length $T$ is designed to be long enough to make the data distribution indistinguishable from Gaussian noise $\mathcal{N}(\vzero, \mI)$. However, a long forward chain also implies a high computational cost for the reverse process, which uses a learned neural network to predict the conditional distribution of the clean data given the noisy one at each step. 
}

{
The proposed TDPM cuts off the last part of the forward chain and only keeps the first $T_\text{trunc}$ steps $\{ \beta_1, \beta_2, ..., \beta_{T_\text{trunc}} \} \subset \{\beta_1, \beta_2, ..., \beta_T \}$. We choose $T_\text{trunc}$ to be much smaller than $T$ so that we can save a lot of computation time in generation. The benefit of this truncation is illustrated in \Figref{fig:motivation}, where the bottom row shows the truncated diffusion chain. We can see that the data are only partially corrupted by noise and still retain some features of the original data. This means that we can recover the data more easily and accurately by applying a few Gaussian denoising steps from the corrupted data. Moreover, we do not change the diffusion rates $\beta_t$ for the first $T_\text{trunc}$ steps, so we do not compromise the quality of the forward and reverse processes between %
time  $0$ and $T_\text{trunc}$.

However, truncating the forward chain also introduces a new challenge for the reverse process. Unlike the original chain, where the starting point of the reverse process is 
$\rvx_{T}\sim\mathcal{N}(\vzero, \mI)$,
the truncated chain has an unknown distribution of the corrupted data at step $T_\text{trunc}$. This makes it difficult to sample from this distribution and initiate the reverse process. To overcome this challenge, we introduce an implicit generative model that approximates the distribution of the corrupted data by minimizing a divergence measure between the implicit and the true noisy distributions at step $T_\text{trunc}$. This way, we can use the implicit model to sample the starting point of the reverse process and then apply the learned denoising network to generate the data.} 

\subsection{Hand-crafted TDPM objective function}
Mathematically, recall that the DDPM loss in \Eqref{eq:loss} consists of three terms: $\gL_0$, $\sum_{t=2}^T\gL_{t-1}$, and $\gL_{T}$. The training objective of a conventional diffusion model focuses on terms $\sum_{t=2}^T\gL_{t-1}$ and $\gL_0$. It assumes $\gL_T$ does not depend on any parameter and will be close to zero by carefully pre-defining the forward noising process such that $q(\rvx_T\given \rvx_0) \approx %
p(\rvx_T) = \mathcal{N}(\vzero, \mI)$. 

When the diffusion chains are truncated 
at time $T_\text{trunc}\ll T$, the forward diffusion ends at time $T_\text{trunc}$, where the marginal distribution of the forward diffusion-corrupted data can be expressed as
\ba{
q(\rvx_{T_\text{trunc}}) \coloneqq  \textstyle\int q(\rvx_{T_\text{trunc}}\given \rvx_0) p(\rvx_0) d\rvx_0, 
\label{eq:aggregated_post}
}
which takes a semi-implicit form \citep{yin2018semi} whose density function is often intractable.
To reverse this truncated forward diffusion chain, we can no longer start the reverse diffusion chain from a known distribution such as $\mathcal{N}(\vzero, \mI)$. To this end, we propose TDPM that starts the reverse chain at time  $T_\text{trunc}$ from $p_{\psi}(\rvx_{T_\text{trunc}})$, an implicit distribution parameterized by $\psi$. 
We match $p_{\psi}(\rvx_{T_\text{trunc}})$ to $q(\rvx_{T_\text{trunc}}) $ via a loss term as
$ %
\tilde{\gL}_{T_\text{trunc}} \coloneqq \mathcal{D}\left(q(\rvx_{T_\text{trunc}})||p_{\psi}(\rvx_{T_\text{trunc}} )\right),
$ %
where $\mathcal{D}(q||p)$ is a statistical distance between distributions $q$ and $p$, such as the Jensen--Shannon divergence and Wasserstein distance. As we keep all the diffusion steps before time $T_\text{trunc}$ in TDPM the same as those in DDPM, we combine $\tilde{\gL}_{T_\text{trunc}}$ with all the loss terms of DDPM before time $T_\text{trunc}$ in \Eqref{eq:loss} to define the TDPM loss  as %
\ba{
{\gL}_{\text{TDPM}} &\coloneqq %
\textstyle \sum_{t=1}^{T_\text{trunc}}\gL_{t-1}(\theta) + \tilde\gL_{T_\text{trunc}}(\psi), ~~ %
\tilde{\gL}_{T_\text{trunc}}(\psi) \coloneqq \mathcal{D}\left(q(\rvx_{T_\text{trunc}})||p_{\psi}(\rvx_{T_\text{trunc}} )\right),\label{eq:Elbo_tilde}
}

We note while in general $p_{\psi}(\rvx_{T_\text{trunc}})$ in TDPM is intractable, we can employ
a deep neural network-based generator $G_{\psi}$ to generate a random sample in a single step via
\ba{
\rvx_{T_\text{trunc}} = G_{\psi}(\rvz),~\rvz\sim \mathcal{N}(\vzero, \mI).
}
We will discuss later that we may simply let $\psi=\theta$ to avoid adding more parameters.

\subsection{TDPM as diffusion-based adversarial auto-encoder}
Following the terminology of AAE, let us define the prior as $p_{\psi}(\rvx_{T_\text{trunc}})$, the decoder (likelihood) as  
\ba{
\textstyle
p_{\theta}(\rvx_{0}\given \rvx_{T_\text{trunc}})  \coloneqq  \int\ldots\int  \big[\prod_{t=1}^{T_\text{trunc}}p_{\theta}(\rvx_{t-1}\given \rvx_{t})\big] d \rvx_{T_\text{trunc}-1}\ldots d \rvx_{1},
\label{eq:decoder}
} which is empowered by a reverse diffusion chain of length ${T_\text{trunc}}$, and the encoder (variational posterior) as  $q(\rvx_{T_\text{trunc}}\given \rvx_0)$. Thus we can view $q(\rvx_{T_\text{trunc}})$ defined in \Eqref{eq:aggregated_post} as the aggregated posterior \citep{hoffman2016elbo,tomczak2018vae}.
In addition to imposing an auto-encoding data-reconstruction loss, the key idea of 
the AAE \citep{makhzani2015adversarial} is to also match the aggregated posterior to a fixed prior. This idea differs AAE from a VAE that regularizes the auto-encoder by matching the variational posterior to a fixed prior under the KL divergence.
To this end, we introduce a diffusion-based AAE (Diffusion-AAE), whose loss function is defined as
\ba{
\mathcal{L}_{\text{Diffusion-AAE}}=-\E_{q(\rvx_0)}\E_{q(\rvx_{T_\text{trunc}}\given \rvx_0)}\log p_{\theta}(\rvx_0\given \rvx_{T_\text{trunc}}) + \mathcal{D}(q(\rvx_{T_\text{trunc}}))|| p_{\psi}(\rvx_{T_\text{trunc}})). \label{eq:DAAE}
}
Diffusion-AAE has two notable differences from a vanilla AAE: 1) its encoder is fixed and has no learnable parameters, while its prior is not fixed and is optimized to match the aggregated posterior, and 2) its decoder is a reverse diffusion chain,  with $T_\text{trunc}$ stochastic layers all  parameterized by $\theta$. 

Note in general as the likelihood in \Eqref{eq:decoder}  is intractable, the first loss term in  \Eqref{eq:DAAE} is intractable. 
However, the loss of Diffusion-AAE is upper bounded by the loss of TDPM, as described below.
\begin{theorem} 
\label{theorem:bound}
The Diffusion-AAE loss in \Eqref{eq:DAAE} is upper bounded by the TDPM loss in \Eqref{eq:Elbo_tilde}:
    $$\mathcal{L}_{\emph{\text{Diffusion-AAE}}}\le {\gL}_{\emph{\text{TDPM}}}.$$
\end{theorem}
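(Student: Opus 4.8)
The plan is to exploit the fact that the two loss functions share an identical second term, the statistical distance $\mathcal{D}(q(\rvx_{T_\text{trunc}})\| p_{\psi}(\rvx_{T_\text{trunc}}))$, so that the inequality collapses to a statement purely about the reconstruction term. After cancelling the common $\psi$-dependent term, it suffices to prove
\[
-\E_{q(\rvx_0)}\E_{q(\rvx_{T_\text{trunc}}\given\rvx_0)}\log p_{\theta}(\rvx_0\given\rvx_{T_\text{trunc}}) \;\le\; \sum_{t=1}^{T_\text{trunc}}\gL_{t-1}(\theta).
\]
This is the familiar variational (ELBO) bound for a diffusion model, but with the chain terminating at $T_\text{trunc}$ and with the terminal variable $\rvx_{T_\text{trunc}}$ treated as given rather than integrated against a fixed prior.

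First I would expand the intractable log-likelihood by marginalizing over the intermediate latents $\rvx_{1:T_\text{trunc}-1}$ according to the definition of the decoder in \Eqref{eq:decoder}, and introduce the forward diffusion posterior $q(\rvx_{1:T_\text{trunc}-1}\given\rvx_0,\rvx_{T_\text{trunc}})$ as a variational distribution inside the integral. Applying Jensen's inequality to the concave logarithm then yields
\[
\log p_{\theta}(\rvx_0\given\rvx_{T_\text{trunc}}) \;\ge\; \E_{q(\rvx_{1:T_\text{trunc}-1}\given\rvx_0,\rvx_{T_\text{trunc}})}\Big[\log\tfrac{\prod_{t=1}^{T_\text{trunc}}p_{\theta}(\rvx_{t-1}\given\rvx_t)}{q(\rvx_{1:T_\text{trunc}-1}\given\rvx_0,\rvx_{T_\text{trunc}})}\Big].
\]
Negating, taking the outer expectation over $q(\rvx_0)q(\rvx_{T_\text{trunc}}\given\rvx_0)$, and using the Markov structure of the forward chain to factor $q(\rvx_{1:T_\text{trunc}-1}\given\rvx_0,\rvx_{T_\text{trunc}})=\prod_{t=2}^{T_\text{trunc}}q(\rvx_{t-1}\given\rvx_t,\rvx_0)$, the ratio telescopes exactly as in the standard DDPM derivation. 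It groups into the per-step terms $\KL(q(\rvx_{t-1}\given\rvx_t,\rvx_0)\|p_{\theta}(\rvx_{t-1}\given\rvx_t))$ for $t=2,\ldots,T_\text{trunc}$ together with the reconstruction term $\E[-\log p_{\theta}(\rvx_0\given\rvx_1)]$, which are precisely $\gL_{t-1}(\theta)$ for $t=1,\ldots,T_\text{trunc}$ as defined in \Eqref{eq:losst}. Reinstating the common $\mathcal{D}$ term on both sides then gives the claim.

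The main thing to get right is the bookkeeping of the conditioning on $\rvx_{T_\text{trunc}}$: because we condition on the terminal latent rather than matching it to a prior, the decomposition should reproduce the first $T_\text{trunc}$ ELBO terms of DDPM but must not generate a spurious $\gL_{T_\text{trunc}}$-type term. I expect the one subtle point is verifying that no unmatched marginal survives at the top of the chain, i.e.\ that the $t=T_\text{trunc}$ factor contributes a genuine divergence $\KL(q(\rvx_{T_\text{trunc}-1}\given\rvx_{T_\text{trunc}},\rvx_0)\|p_{\theta}(\rvx_{T_\text{trunc}-1}\given\rvx_{T_\text{trunc}}))$; this follows from the consistency of the Gaussian forward posterior used in the factorization above. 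Everything else is the routine rearrangement already carried out for the full chain in \citet{ddpm}, now truncated at $T_\text{trunc}$.
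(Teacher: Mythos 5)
Your proposal is correct and follows essentially the same route as the paper: cancel the common prior-matching term, apply Jensen's inequality with the conditional forward posterior $q(\rvx_{1:T_\text{trunc}-1}\given\rvx_0,\rvx_{T_\text{trunc}})$ as the variational distribution, and rearrange into $\sum_{t=1}^{T_\text{trunc}}\gL_{t-1}$. The only (immaterial) difference is bookkeeping — you telescope the conditional chain directly via $q(\rvx_{1:T_\text{trunc}-1}\given\rvx_0,\rvx_{T_\text{trunc}})=\prod_{t=2}^{T_\text{trunc}}q(\rvx_{t-1}\given\rvx_t,\rvx_0)$, whereas the paper multiplies and divides by $q(\rvx_{T_\text{trunc}}\given\rvx_0)/p(\rvx_{T_\text{trunc}})$ so as to reuse the full-chain DDPM ELBO and then cancel the resulting $\gL_{T_\text{trunc}}$ term.
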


\subsection{Matching the prior to aggregated posterior}
Via the loss term $\tilde{\gL}_{T_\text{trunc}} \coloneqq \mathcal{D}\left(q(\rvx_{T_\text{trunc}})||p_{\psi}(\rvx_{T_\text{trunc}} )\right)$ in \Eqref{eq:Elbo_tilde}, we aim to match the prior $p_{\psi}(\rvx_{T_\text{trunc}} )$ to the aggregated posterior $q(\rvx_{T_\text{trunc}})$ in TDPM. While we have an analytic density function for neither $p$ nor $q$, we can easily draw random samples from both of them. Thus, we explore the use of two different types of statistical distances that can be estimated from samples of both $q$ and $p$.
We empirically show that TDPM can achieve good performance regardless of which distance is used for~optimization.

One possible statistical distance is based on the idea of GANs~\citep{goodfellow2014generative,arjovsky2017wasserstein,binkowski2018demystifying}, which are widely used to learn implicit distributions from empirical data. In this setting, we use a generator $G_\psi(\cdot): \sR^d \rightarrow \sR^d$ to transform samples from an isotropic Gaussian $p(\rvz)$ into samples that approximate the corrupted data, and a discriminator $D_\phi(\cdot): \sR^d \rightarrow [0,1]$ to distinguish between the samples from the corrupted data distribution $q(\rvx_{T_\text{trunc}}\given \rvx_0)$ and the implicit generative distribution $p_\psi(\rvx_{T_\text{trunc}})$. The generator and the discriminator are trained by the following objective $\gL_{T_\text{trunc}}^\mathrm{GAN}$:
\ba{
\min_\psi \max_\phi~~~ &\E_{\rvx \sim q(\rvx_{T_\text{trunc}})}\! \left[\log D_\phi(\rvx) \right]  + \E_{\rvz \sim p(\rvz)} \left[ \log (1\!-\! D_\phi(G_\psi(\rvz))) \right]\!. \label{eq:gan_training}
}

\subsection{Training algorithm}
As the objective in Equation \ref{eq:Elbo_tilde}  is a sum of different terms, following
DDPM~\citep{ddpm} to fix the terms $\Sigma_{\theta}(x_t, t) = \sigma^2_t \mI$, we can simplify $\frac 1 {T_{\text{trunc}}}\sum_{t=1}^{T_{\text{trunc}}}\gL_{t-1}$ as an expectation %
defined as
\begin{equation}
\gL_{\text{simple\_trunc}} = \E_{t,\rvx_0,\epsilon_t}\left[ || \epsilon_t - \epsilon_{\theta}(\rvx_t, t) ||^2 \right], ~~ t \sim \mbox{Unif}(1, 2,\ldots, T_\text{trunc} ), ~~\epsilon_t\sim \mathcal{N}(\vzero, \mI) \label{eq:simple}
\end{equation}
where  $\epsilon_t$ is an injected noise at a uniformly sampled timestep index $t$, $\rvx_t =\sqrt{\bar \alpha_t} \rvx_0 + \sqrt{1 - \bar \alpha_t} \epsilon_t$ is a noisy image at time $t$,  and $\epsilon_\theta$ is a denoising U-Net that predicts the noise in order to refine the noisy image $\rvx_t$.   %
Therefore the final simplified version of \Eqref{eq:Elbo_tilde} is constructed as 
\ba{
&\gL_{\text{TDPM}}^\text{GAN} = \gL_{\text{simple\_trunc}} + 
\lambda %
\gL_{T_\text{trunc}}^\text{GAN} 
\label{eq:diffusion_gan_obj}, 
.
}
While $\lambda$, the weight of $\mathcal L_{T_\text{trunc}}$, can be tuned, we fix it as one for simplicity. 
Here the TDPM objective consists of two parts: the denoising part $\epsilon_\theta$ is focused on denoising the truncated chain, getting updated from $\gL_{\text{simple\_trunc}}$, while the implicit part $G_\psi$ is focused on minimizing $\E_q[\mathcal{D}\left(q(\rvx_{T_\text{trunc}})||p_\psi(\rvx_{T_\text{trunc}})\right)]$, {getting updated from $\gL_{T_\text{trunc}}^\text{GAN}$}. %

An interesting finding of this paper is that we do not necessarily need to introduce a separate set of parameters $\psi$ for the generator $G_{\psi}$, as we can simply reuse the same parameters $\theta$ of the reverse diffusion model ($i.e.$, let $\psi=\theta$) without clearly hurting the empirical performance. 
This suggests that the reverse diffusion process from $T$ to $T_\text{trunc}$   could  be effectively approximated by a single step using the same network architecture and parameters as the reverse diffusion steps from $T_\text{trunc}$ to $0$.

Therefore, we provide two configurations to parameterize the implicit distributions. \textbf{1)} To save parameters, we let the implicit generator and  denoising model share the same U-Net parameters but using different time step indices. Specifically, we
first use $\rvx_{T_\text{trunc}}\!=\!G_{\psi}(\rvx_{T})\!=\!\epsilon_\theta(\rvx_{T}, t\!\!=\!\!T_\text{trunc}\!+\!1)$, where $\rvx_{T} \sim \mathcal{N}(\vzero, \mI)$, to generate a noisy image at time $T_\text{trunc}$. \textbf{2)} We further explore employing a different model, \textit{e.g.}, StyleGAN2~\citep{karras2020training}, for the implicit generator, which provides better performance but increases the model size to get $\rvx_{T_\text{Trunc}}$. Then for $t\!=\!T_\text{trunc}, \ldots, 1$, we %
iteratively refine it as $\rvx_{t-1}= \frac{1}{\sqrt{ \alpha_t}}( \rvx_{t} - \frac{1-\alpha_t}{\sqrt{1-\bar \alpha_t}} \epsilon_\theta (\rvx_t, t)) + \beta_t \rvz_t$, where $\rvz_{t} \sim N(\vzero, \mI)$ when $t>1$ and $\rvz_{1} = \vzero$. This process is depicted in Algorithms~\ref{alg:training} and \ref{alg:sampling} in the Appendix. For the implementation details, please refer to Appendix~\ref{appendix:experiments} and our code at \url{https://github.com/JegZheng/truncated-diffusion-probabilistic-models}.

\subsection{Related work}
In our previous discussions, we have related TDPM to several existing works such as DDPM and AAE. A detailed discussion on other related works is provided in Appendix \ref{sec:relatedwork}.

\section{Experiments}\label{sec:experiments}

We aim to demonstrate that TDPM can generate good samples faster by using fewer steps of reverse diffusion. We use different image datasets to test our method and follow the same setting as other diffusion models~\citep{ddpm,nichol2021improved,Dhariwal2021DiffusionMB,rombach2022high} for our backbones. We also have two ways to set up the implicit generator that starts the reverse diffusion. One way is to reuse the denoising network, and the other way is to use a separate network. We try both ways for generating images without any labels. For generating images from text, we use the first way with the LDM backbone. We provide comprehensive details, toy examples, and additional experimental results in Appendices \ref{sec:toy experiments}-\ref{appendix:additional results_textimage}.

\begin{table}[t]
\begin{minipage}[t]{0.48\linewidth}
\centering
\small 
\caption{\small Results of unconditional generation on CIFAR-10, with the best FID and Recall in each group marked in bold. To compare TDPM ($T_\text{Trunc}$=0) with GAN-based methods, we use DDPM backbone as generator and StyleGAN2 discriminator.}\vspace{-2mm}
\label{tab:cifar}
\centering
\begin{tabular}{llll}
\toprule[1.5pt]
Method              & \multicolumn{1}{l}{NFE} & \multicolumn{1}{l}{\!\!FID$\downarrow$} & \multicolumn{1}{l}{\!\!\!\!Recall$\uparrow$} \\ \hline
\multicolumn{3}{l}{\textcolor{gray}{DDPM backbone}} \\
DDPM          & 1000                    & 3.21                    & 0.57                       \\
TDPM~~ ($T_\text{Trunc}$=99) & 100                     & 3.10                     & 0.57                       \\
TDPM+ ($T_\text{Trunc}$=99) & 100                     & \textbf{2.88}                    & \textbf{0.58 }                      \\ \hdashline
DDIM          & 50                    & 4.67                    & 0.53                       \\
TDPM~~ ($T_\text{Trunc}$=49) & 50                      & 3.30                     & 0.57                       \\
TDPM+ ($T_\text{Trunc}$=49) & 50                      & 2.94                    & \textbf{0.58}                       \\ \hdashline
TDPM~~ ($T_\text{Trunc}$=4)  & 5                       & 3.34                    & 0.57                       \\
TDPM+ ($T_\text{Trunc}$=4)  & 5                       & 3.21                    & 0.57                       \\ \hline
\multicolumn{3}{l}{\textcolor{gray}{Improved DDPM backbone}} \\
Improved DDPM           & 4000                    & 2.90                     & 0.58                       \\
TDPM~~ ($T_\text{Trunc}$=99) & 100                     & 2.97                    & 0.57                       \\
TDPM+ ($T_\text{Trunc}$=99) & 100                     & \textbf{2.83}           &\textbf{0.58}                     \\ \hdashline
Improved DDPM+DDIM          & 50                    & 3.92                    & 0.55                       \\
TDPM~~ ($T_\text{Trunc}$=49) & 50                      & 3.11                    & 0.57                       \\
TDPM+ ($T_\text{Trunc}$=49) & 50                      & 2.96                    & 0.58                       \\ %
TDPM~~ ($T_\text{Trunc}$=4)  & 5                       & 3.51                    & 0.55                       \\
TDPM+ ($T_\text{Trunc}$=4)  & 5                       & 3.17                    & 0.57                       \\ \hline
\multicolumn{3}{l}{\textcolor{gray}{GAN-based}} \\
{DDGAN} & {4}                       & {3.75}             & {0.57}      \\
StyleGAN2     & 1                       & 8.32                    & 0.41                       \\
StyleGAN2-ADA & 1                       & \textbf{2.92}                    & \textbf{0.49}                       \\
TDPM ($T_\text{Trunc}$=0)  & 1                       & 7.34                    & 0.46                       \\
\bottomrule[1.5pt]
\end{tabular}
\end{minipage}\hfill
\begin{minipage}[t]{.5\textwidth}
    \centering
\caption{\small Results on LSUN-Church and LSUN-Bedroom (resolution $256 \times 256$). Similar to Table~\ref{tab:cifar}, TDPM ($T_\text{Trunc}$=0) uses DDPM backbone for the generator. }
\label{tab:lsun church 256}
\small
\renewcommand{\arraystretch}{1.0}
    \setlength{\tabcolsep}{1.0mm}{ 
   \begin{tabular}{llll}
   \toprule[1.5pt]
              &  & Church & Bedroom \\
Method              & \multicolumn{1}{l}{NFE} & \multicolumn{1}{l}{FID} & \multicolumn{1}{l}{FID} \\ \hline
\multicolumn{3}{l}{\textcolor{gray}{DDPM backbone}} \\
DDPM          & 1000                    & 7.89        & 4.90           \\
TDPM~~ ($T_\text{Trunc}$=99) & 100                     & 4.33   & 3.95                 \\
TDPM+ ($T_\text{Trunc}$=99) & 100                     & \textbf{3.98}  & \textbf{3.67}                  \\ \hdashline
DDIM                        & 50                      & 10.58       & 6.62              \\
TDPM~~ ($T_\text{Trunc}$=49) & 50                      & 5.35       &   4.10            \\
TDPM+ ($T_\text{Trunc}$=49) & 50                      & 4.34     &   3.98              \\ \hdashline
TDPM~~ ($T_\text{Trunc}$=4)  & 5                       & 4.98        & 4.16          \\
TDPM+ ($T_\text{Trunc}$=4)  & 5                       & 4.89        & 4.09             \\ \hline
\multicolumn{3}{l}{\textcolor{gray}{ADM backbone}} \\
ADM           & 1000                    & \textbf{3.49}     & 1.90         \\
ADM+DDIM           & 250                    & 6.45          & 2.31          \\
TDPM~~ ($T_\text{Trunc}$=99) & 100                     & 4.41     & 2.24              \\
TDPM+ ($T_\text{Trunc}$=99) & 100                     & 3.61    & \textbf{1.88}                \\ 
TDPM~~ ($T_\text{Trunc}$=49) & 50                      & 4.57   & 2.92               \\
TDPM+ ($T_\text{Trunc}$=49) & 50                      & 3.67    & 1.89             \\  %
TDPM~~ ($T_\text{Trunc}$=4)  & 5                       & 5.61   & 7.92              \\
TDPM+ ($T_\text{Trunc}$=4)  & 5                       & 4.66    & 4.01            \\ \hline
\multicolumn{3}{l}{\textcolor{gray}{GAN-based}} \\
{DDGAN} & {4}                       & {5.25}             & {-}      \\
StyleGAN2     & 1                       & \textbf{3.93}     & \textbf{3.98}     \\
StyleGAN2-ADA & 1                       & 4.12              & 7.89      \\
TDPM ($T_\text{Trunc}$=0)  & 1                       & 4.77      & 5.24          \\
\bottomrule[1.5pt]
 \end{tabular}}
\vspace{-5mm}
\end{minipage}\vspace{-5mm}
\end{table}

\begin{table}[t]
\begin{minipage}[t]{0.48\linewidth}
\centering
\small 
\caption{\small {Results of ImageNet-64$\times$64, evaluated with FID and Recall. TDPM+ is built with a pre-trained ADM and an implicit model trained at $T_\text{Trunc}$ using StylGAN-XL.}}\vspace{-2mm}
\label{tab:imagenet}
\centering
\resizebox{\columnwidth}{!}{
\begin{tabular}{llll}
\toprule[1.5pt]
{Method}              & \multicolumn{1}{l}{{NFE}} & \multicolumn{1}{l}{\!\!{FID}$\downarrow$} & \multicolumn{1}{l}{\!\!\!\!{Recall}$\uparrow$} \\ \hline
{ADM}           & {1000}                    & {2.07}                    & \textbf{{0.63}}                       \\
{TDPM+ ($T_\text{Trunc}$=99)} & {100}                     & \textbf{{1.62}}                    & \textbf{{0.63}}                      \\ 
{TDPM+ ($T_\text{Trunc}$=49)} & {50}                      & {1.77}                    & {{0.58}}                       \\ 
{TDPM+ ($T_\text{Trunc}$=4)}  & {5}                       & {1.92}                    & {0.53}                       \\ 
{StyleGAN-XL (wo PG)} & {1}                      & {3.54}                    & {0.51}                       \\ %
\bottomrule[1.5pt]
\end{tabular}
}
\end{minipage}
\hfill
\begin{minipage}[t]{0.48\linewidth}
\vspace{0mm}
\centering
\includegraphics[width=.8\textwidth]{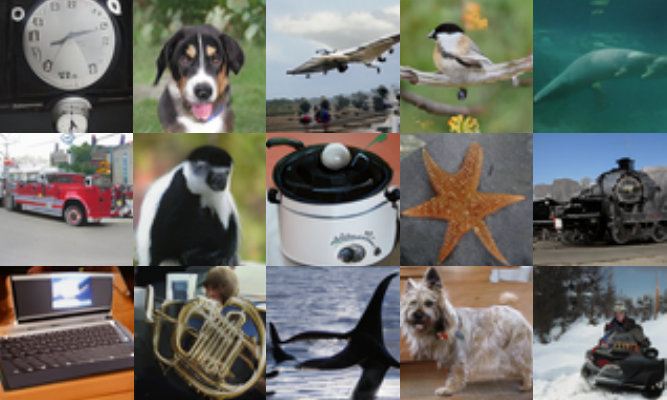}\vspace{-3mm}
\captionof{figure}{\small {Random generation results of TDPM+ ($T_\text{Trunc}$=4) on ImageNet-64$\times$64. }}\label{fig:imagenet}
\end{minipage}\vspace{-8mm}
\end{table}

We use FID (lower is better) and Recall (higher is better) to measure the fidelity and diversity, respectively, of the generated images. We use CIFAR-10 \citep{cifar10}, LSUN-bedroom, and LSUN-Church \citep{lsun} datasets in unconditional experiments, and CUB-200~\citep{cub} and MS-COCO~\citep{mscoco} for text-to-image experiments. The images consist of $32 \times 32$ pixels for CIFAR-10 and $256 \times 256$ pixels for the other datasets.

\subsection{Efficiency in both training and sampling}

We first look at the results on CIFAR-10. We use DDPM~\citep{ddpm} or improved DDPM~\citep{nichol2021improved} as our backbones. We use 4, 49, or 99 steps of reverse diffusion, which correspond to 5, 50, or 100 number of function evaluations (NFE). For the implicit generator, we either reuse the denoising U-Net or use a StyleGAN2 network (respectively, we call them TDPM and TDPM+). {For comparison, we also include %
DDIM~\citep{ddim} and DDGAN~\citep{xiao2021tackling}. The comparison with a more diverse set of baselines can be found in Table~\ref{tab:cifar_full} in Appendix \ref{appendix:additional unconditional results}.} 

\begin{figure}[t]
\vspace{-2mm}
\begin{minipage}[t]{0.48\linewidth}
\vspace{0mm}
    \centering
    \includegraphics[width=.95\textwidth]{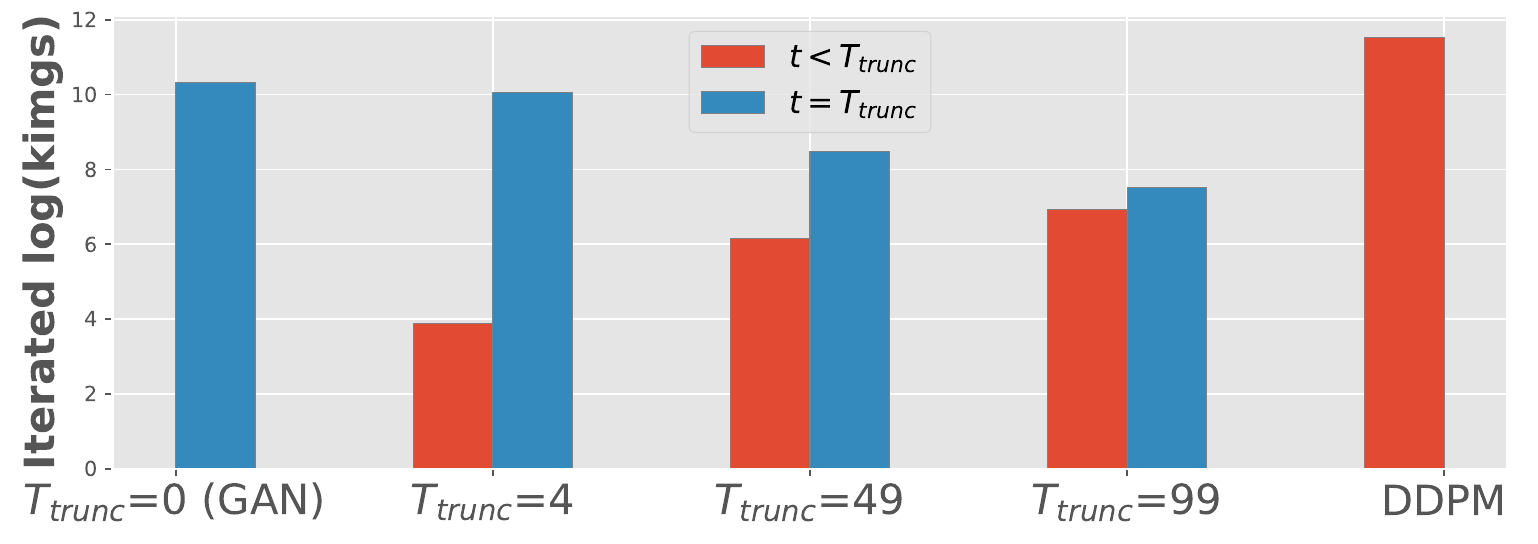}\vspace{-3mm}
    \captionof{figure}{\small The required iterations (measured with iterated images) to converge in the training. The iterations for $t\!<\!T_\text{Trunc}$ ($\epsilon_\theta$) and $t\!=\!T_\text{Trunc}$ ($G_\psi$) are marked in red and blue, respectively. }\label{fig:iter_time}
\end{minipage}\hfill
\begin{minipage}[t]{0.48\linewidth}
\vspace{0mm}
    \centering
    \includegraphics[width=.95\textwidth]{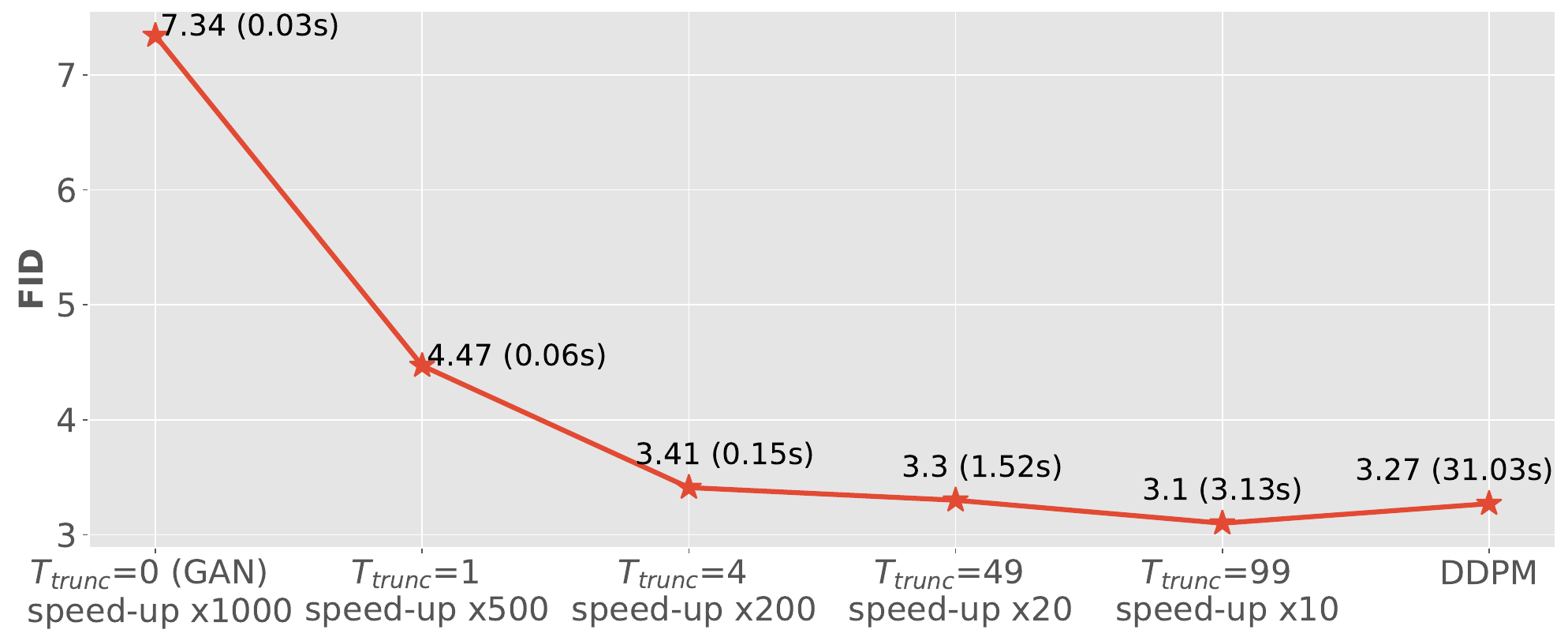}\vspace{-3mm}
    \captionof{figure}{\small Evolution of FID and corresponding GPU time (s/image) across different timesteps in the sampling stage. }\label{fig:fid_time}
\end{minipage}
\vspace{0mm}
\end{figure}

Table~\ref{tab:cifar} shows that our TDPM can get good FID with fewer NFE. TDPM+ can get even better FID, and it is the best when NFE=100. Compared with TDPM with 0 steps of reverse diffusion (a GAN with DDPM's U-Net as generator and StyleGAN2 as discriminator) and StyleGAN2, TDPM with more than 0 steps of reverse diffusion has better recall and the FID is as good as StyleGAN2-ADA (a GAN with data augmentation for better training). This means TDPM can largely avoid the mode missing problem in GANs. We show some examples of generated images on CIFAR-10 in \Figref{fig:image_cifar10_celeba}.

We also check how fast TDPM can train and sample. In training, we count how many images TDPM needs to well fit the truncated diffusion chain and the implicit prior. \Figref{fig:iter_time} shows that when we use fewer steps of reverse diffusion, the diffusion part needs less time to train. But the implicit prior needs more time to train because it has to model a harder distribution, \textit{e.g.}, fitting the implicit prior with 4 diffusion steps needs similar time to directly fit it on the data. When we use 99 steps of reverse diffusion, the diffusion chain and the implicit prior need similar time to train, and the whole model trains faster than both GAN and DDPM. 
In sampling, we compare TDPM with 0, 1, 4, 49, or 99 steps of reverse diffusion. We report both FID and the sampling time (s/image) on one NVIDIA V100 GPU in \Figref{fig:fid_time}. When we use 4 steps of reverse diffusion, the FID is much lower than 0 steps, and the sampling time is slightly longer. When we use more steps of reverse diffusion, the FID goes down slowly, but the sampling time goes up linearly. When we use 99 steps of reverse diffusion, the FID of TDPM is better than DDPM with 1000 steps. Because the FID does not change much when we use more steps of reverse diffusion, we suggest using a small number of steps, such as 4 or more, to balance the quality and speed of generation.

\subsection{Results on higher-resolution and more diverse image datasets}

To test the performance of the proposed truncation method on high-resolution images, we train TDPM using two different diffusion models, DDPM~\citep{ddpm} and ADM~\citep{Dhariwal2021DiffusionMB}, as backbones on two datasets of $256\times256$ resolution, LSUN-Church and LSUN-Bedroom \citep{lsun}. We compare the FIDs of TDPM with those of the backbone models and some state-of-the-art GANs in Tables~\ref{tab:lsun church 256}. The results show that TDPM can generate images of similar quality with much smaller truncation steps ${T_\text{trunc}}$, which means that it can produce images significantly faster than the backbone models. We also visualize the samples from the implicit distribution $\rvx_{T_\text{trunc}} \sim p_\theta(\rvx_{T_\text{trunc}})$ that TDPM generates and the corresponding $\rvx_0$ that it finishes at the end of reverse chain in \Figref{fig:church_bedroom_adm}.
\begin{table}[t]
\vspace{-3.5mm}
\begin{minipage}[t]{\linewidth}
\vspace{-1mm}
    \centering
    \includegraphics[width=.85\textwidth]{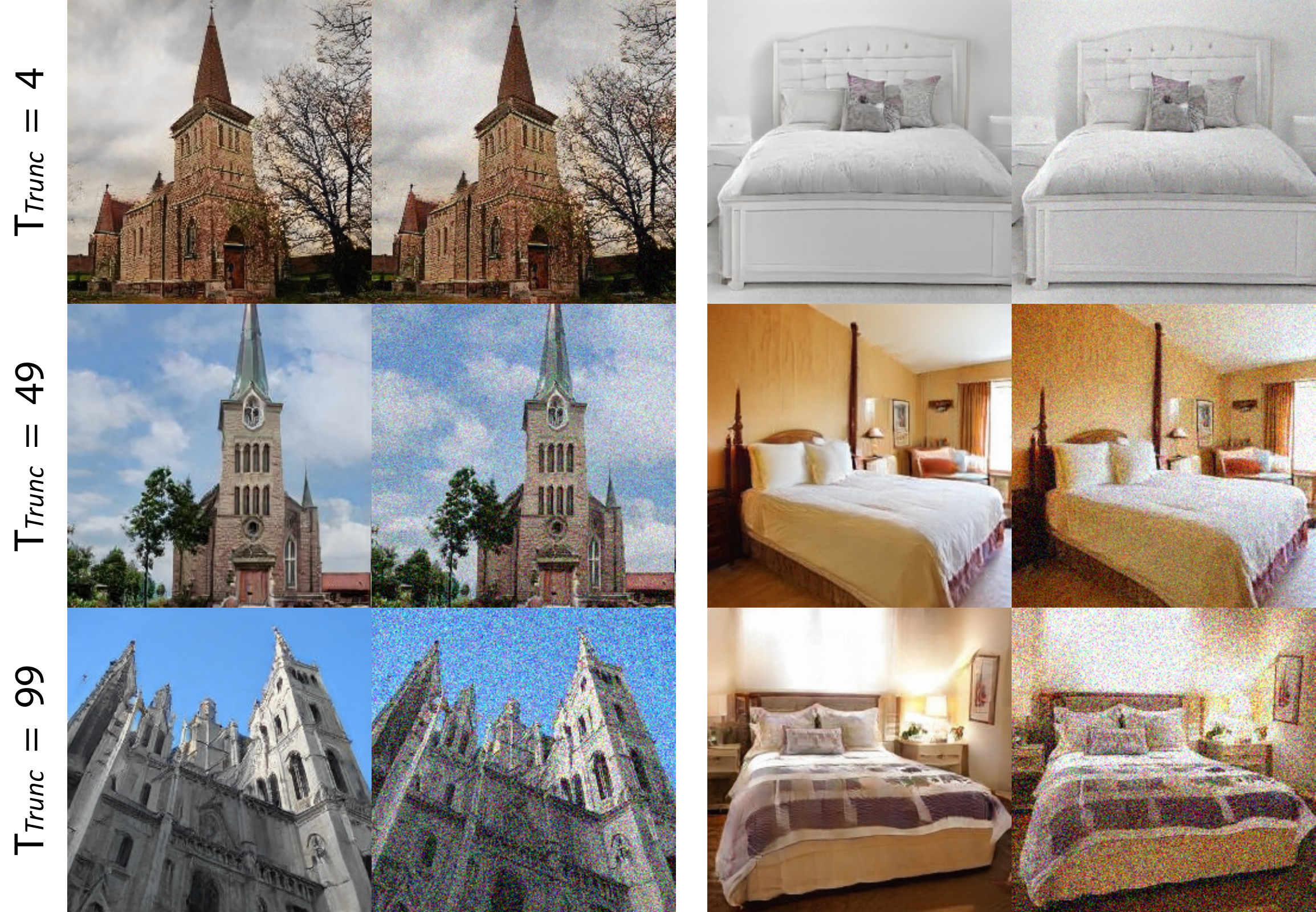}\vspace{-2mm}
    \captionof{figure}{\small Randomly generated images  of TDPM using ADM \citep{Dhariwal2021DiffusionMB} backbone on LSUN- Church and LSUN-Bedroom ($256\times 256$), with ${T_\text{trunc}}=4$, $49$, and $99$. Note $\mbox{NFE}={T_\text{trunc}}+1$ in TDPM. Each group presents generated samples from the full model $ p_\theta(\rvx_0)$ (left) and its implicit prior $\sim p_\theta(\rvx_{T_\text{trunc}})$ (right); the full model sample is obtained by refining the implicit prior sample via the truncated reverse diffusion chain.}\label{fig:church_bedroom_adm}
\end{minipage}
\vspace{-7mm}
\end{table}

{
We further evaluate TDPM on ImageNet-1K (with resolution 64$\times $64) that exhibits high diversity.
Here we adopt the TDPM+ configuration, where we use a pre-trained ADM~\citep{Dhariwal2021DiffusionMB} checkpoint for $t<T_\text{Trunc}$ and train a StyleGAN-XL~\citep{sauer2022styleganxl} based implicit model at $t=T_\text{Trunc}$ (for simplicity, we choose to not use the progressive growing pipeline of StyleGAN-XL; See Appendix~\ref{appendix:experiments} for more details). %
 We compare both FID and Recall with our backbone models %
 in Table~\ref{tab:imagenet} and show example generations in \Figref{fig:imagenet}. 
 Similar to our observations in Table \ref{tab:cifar}, TDPM has 
 good generation quality with %
 small
 truncation steps ${T_\text{trunc}}$. Moreover, properly training an implicit model at ${T_\text{trunc}}$ can further improve the performance of the backbone. 
}

\vspace{-2mm}\subsection{Text-to-image generation}
Besides unconditional generation tasks,  we develop for text-to-image generation the TLDM, a conditional version of TDPM that leverages  as the backbone the LDM of \citet{rombach2022high}, which is a state-of-the-art publicly released model with 1.45B parameters pre-trained on LAION-400M~\citep{laion}. LDM consists of a fixed auto-encoder for pixel generation and a latent-diffusion module to connect text and image embeddings. Here we fine-tune its latent-diffusion part on CUB-200 and MS-COCO datasets with 25K and 100K steps as the baseline. Similar to the unconditional case, we fine-tune with the LDM loss for $t<T_\text{Trunc}$ and GAN loss for $t=T_\text{Trunc}$. More details about the setting can be found in Appendix~\ref{appendix:experiments}. 

The results of LDM with different DDIM sampling steps and TLDM with different truncated steps are summarized in \Figref{fig:text_img_FID} and Table~\ref{tab:text-image}. Similar to applying diffusion directly on the original image-pixel space, when the diffusion chain is applied in the latent space, we observe 
TLDM can achieve comparable or better performance than LDM even though it has shortened the diffusion chain of LDM to have much fewer reverse diffusion steps.
For the case that NFE is as small as 5, we note although the FID of TLDM has become higher due to using fewer diffusion steps, the generated image using TLDM at NFE=5 is still visually appealing, as shown in \Figref{fig:text_img}. Compared with 50 and 250 steps using LDM,
the sampling speed of TLDM using 5 steps is 10 and 50 %
times faster, respectively, while largely preserving generation quality.
We provide additional text-to-image generation results of TLDM in Appendix \ref{appendix:additional results_textimage}.

\begin{table}[t]
\begin{minipage}[t]{0.44\linewidth}
\centering
\vspace{0mm}
    \includegraphics[width=.95\textwidth]{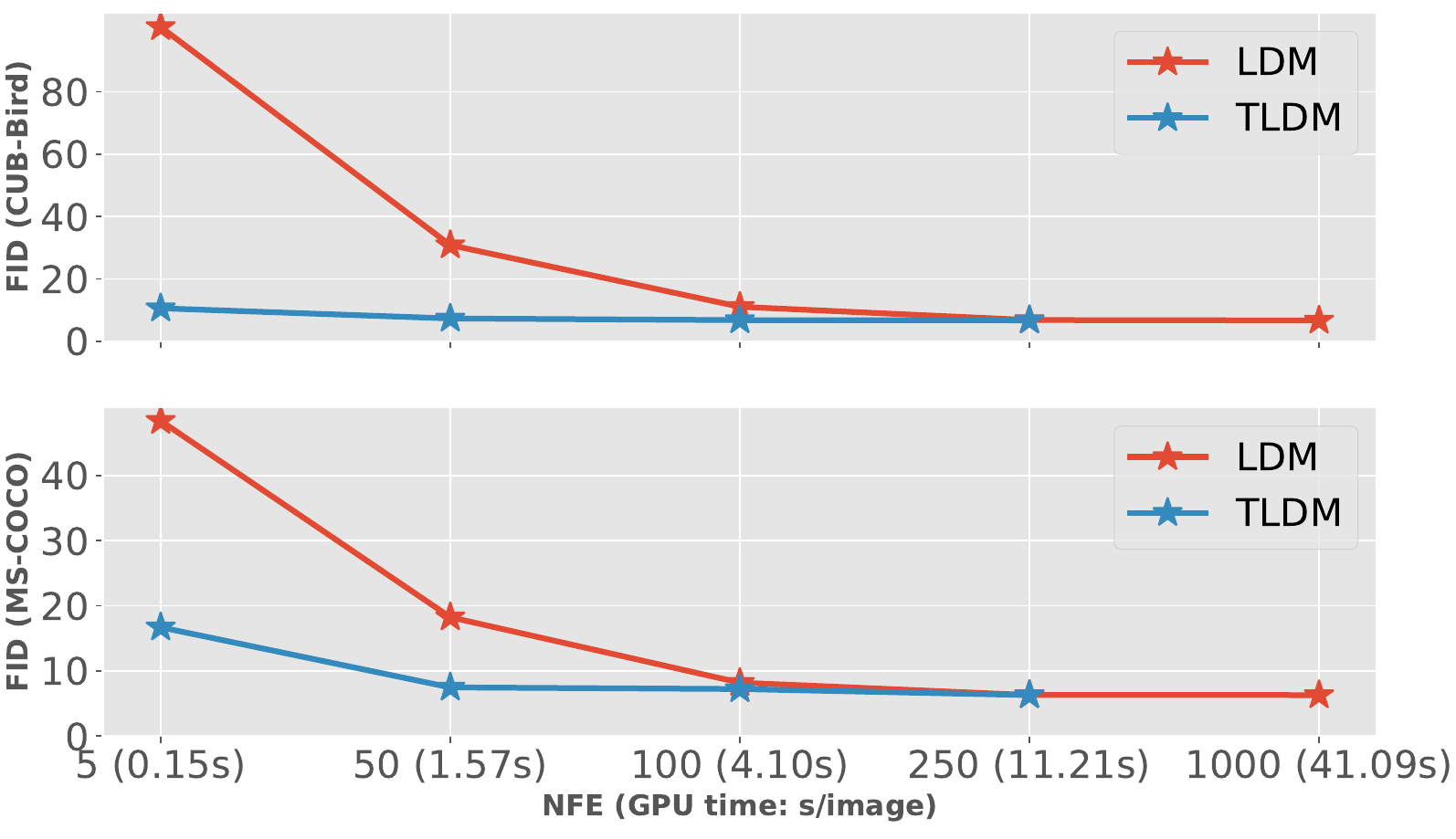}\vspace{-4mm}
    \captionof{figure}{\small Quantitative text-to-image results (FID and GPU time) across different NFE.}\label{fig:text_img_FID}
\end{minipage}
\begin{minipage}[t]{0.55\linewidth}
\caption{\small Numerical results of \Figref{fig:text_img_FID}. The GPU time of sampling (s/image) is measured on one NVIDIA A100.}
\label{tab:text-image}
\centering
\resizebox{.95\textwidth}{!}{\begin{tabular}{ll|cc|cc}
\toprule[1.5pt]
    \multicolumn{2}{l|}{}           & \multicolumn{2}{l|}{CUB-Bird}                                       & \multicolumn{2}{l}{MS-COCO}                                        \\ \hline
NFE  & GPU time & {LDM} & {TLDM} &{LDM} & {TLDM} \\ \hline
5    & 0.15     & 100.81                        & 10.59                              & 48.41                         & 16.7                               \\
50   & 1.57       & 30.85                         & 7.32                               & 18.25                         & 7.47                               \\
100  & 4.10       & 11.07                         & 6.79                               & 8.2                           & 7.22                               \\
250  & 11.21      & 6.82                          & 6.72                               & 6.3                           & 6.29                               \\
1000 & 41.09      & {6.68}    &  -                                       & {6.29} & -   \\            
\bottomrule[1.5pt]                           
\end{tabular}}
\end{minipage}
\\
\begin{minipage}[t]{\linewidth}
\vspace{-1mm}
    \centering
    \includegraphics[width=.75\textwidth]{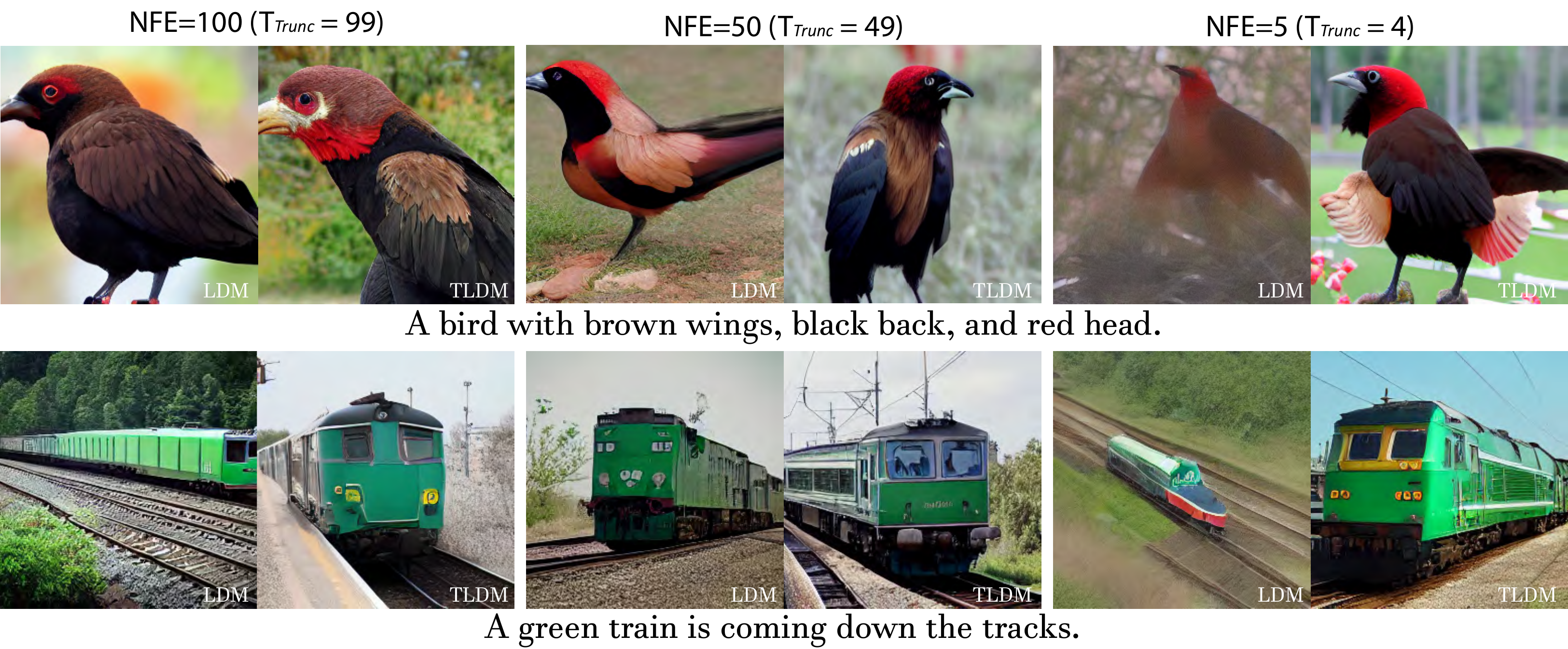}\vspace{-3mm}
    \captionof{figure}{\small Example text-to-image generation results of LDM and TLDM ($i.e.,$ TDPM with LDM backbone) %
    finetuned on CUB-200 (top row) or MS-COCO (bottom row), setting the number of times iterating through the reverse diffusion U-Net as 100 (left column), 50 (middle column), or 5 (right column).  }\label{fig:text_img}
\end{minipage}
\vspace{-7mm}
\end{table}

\section{Conclusion}%
In this paper, we investigate how to reduce the trajectory length of the diffusion chain to achieve efficient sampling without loss of generation quality. We propose truncated diffusion probabilistic modeling (TDPM) that truncates the length of a diffusion chain. In this way, TDPM can use a much shorter diffusion chain, %
while being required to start the reverse denoising process from an intractable distribution. We propose to learn such a distribution with an implicit generative model powered by the same U-Net used for denoising diffusion, and validate with multiple ways to learn the implicit distribution to ensure the robustness of the proposed TDPM. We reveal that TDPM can be cast as an adversarial auto-encoder with a learnable implicit prior. We conduct extensive experiments on both synthetic  and real image data to demonstrate the effectiveness of TDPM in terms of both sample quality and efficiency, where the diffusion chain can be shortened to have only a few steps.

\subsubsection*{Acknowledgments}
H. Zheng and M. Zhou acknowledge the support of NSF-IIS 2212418 and IFML.

\bibliography{iclr2023_conference}
\bibliographystyle{iclr2023_conference}

\newpage
\appendix
\section{Proof}
\begin{proof}[Proof of Theorem \ref{theorem:bound}]
As the last terms in both losses are the same, we only need to show that the first term in \Eqref{eq:DAAE} is smaller than or equal to $\gL_0 +   \textstyle \sum_{t=2}^{T_\text{trunc}}\gL_{t-1}$ in \Eqref{eq:Elbo_tilde}.
    Using Jensen's inequality, we have
\ba{
&-\E_{q(\rvx_{0})}\E_{q(\rvx_{T_\text{trunc}}\given \rvx_0)}\log p_{\theta}(\rvx_0\given \rvx_{T_\text{trunc}})\notag\\
&=  -\E_{q(\rvx_{0})}\E_{q(\rvx_{T_\text{trunc}}\given \rvx_0)} \log\E_{q(\rvx_{1: {T_\text{trunc}}-1}\given  \rvx_0,\rvx_{T_\text{trunc}})}\left[\frac{ p(\rvx_{0: {T_\text{trunc}}-1}\given \rvx_{T_\text{trunc}})}{q(\rvx_{1: {T_\text{trunc}}-1}\given \rvx_0,\rvx_{T_\text{trunc}})} \right]\notag\\
&\le -\E_{q(\rvx_{0})}\E_{q(\rvx_{T_\text{trunc}}\given \rvx_0)} \E_{q(\rvx_{1: {T_\text{trunc}}-1}\given \rvx_0,\rvx_{T_\text{trunc}})} \log\frac{ p(\rvx_{0: {T_\text{trunc}}-1}\given \rvx_{T_\text{trunc}})}{q(\rvx_{1: {T_\text{trunc}}-1}\given \rvx_0,\rvx_{T_\text{trunc}})} \notag\\
&=- \E_{q(\rvx_{0})}\E_{q(\rvx_{1: {T_\text{trunc}}}\given \rvx_0)} \log\left[\frac{ p(\rvx_{0: {T_\text{trunc}}-1})}{q(\rvx_{1: {T_\text{trunc}}}\given \rvx_0)} 
\frac{q(\rvx_{{T_\text{trunc}}}\given \rvx_0)}{ p(\rvx_{T_\text{trunc}})}\right]
\notag\\
&=\left(- \E_{q(\rvx_{0})}\E_{q(\rvx_{1: {T_\text{trunc}}}\given \rvx_0)} \log\frac{ p(\rvx_{0: {T_\text{trunc}}-1})}{q(\rvx_{1: {T_\text{trunc}}}\given \rvx_0)}\right)
-
\E_{q(\rvx_{0})}\E_{q(\rvx_{{T_\text{trunc}}}\given \rvx_0)}\log
\frac{q(\rvx_{{T_\text{trunc}}}\given \rvx_0)}{ p(\rvx_{T_\text{trunc}})}\notag\\
&=(  \textstyle \sum_{t=1}^{T_\text{trunc}}\gL_{t-1}+ %
\gL_{T_\text{trunc}} 
) - \gL_{T_\text{trunc}} \notag\\
&=%
\sum_{t=1}^{T_\text{trunc}}\gL_{t-1},
}
where the second to last equality follows the same derivation of the ELBO in \citet{ddpm}.
\end{proof}

\section{Related work}\label{sec:relatedwork} %
Diffusion probabilistic models \citep{diffusion, ddpm} employ a forward Markov chain to diffuse the data to noise  and learn the reversal of such a diffusion process. With the idea of exploiting the Markov operations~\citep{goyal2017variational, alain2016gsns, bordes2017learning}, diffusion models achieve great success and inspire a variety of tasks including image generation and audio generation~\citep{diffwave,wavegrad,adversarialdiff,vahdat2021score}. Recently, plenty of studies have been proposed to generalize diffusion model to continuous time diffusion and improve the diffusion models in likelihood estimation~\citep{vincent2011connection, improvedscore,scorematching,nichol2021improved,song2021scorebased,song2021maximum,kingma2021variational}.  

Another mainstream is to improve the sampling efficiency of  diffusion models, which are known for their enormous number of sampling steps. \citet{luhman2021knowledge} improve diffusion processes with knowledge distillation and \citet{san2021noise} propose a learnable adaptive noise schedule.  \citet{ddim} and \citet{kong2021fast} exploit non-Markovian diffusion processes and shorten the denoising segments. \citet{jolicoeur2021gotta} and \citet{huang2021variational} use better SDE solvers for continuous-time models. Aside from these works, recently other types of generative models such as VAEs~\citep{kingma2013auto}, GANs~\citep{goodfellow2014generative}, and autoregressive models~\citep{pixelcnn} have been incorporated to diffusion models. They are shown to benefit each other~\citep{xiao2021tackling,pandey2022diffusevae,meng2021improved} and have a closer relation to our work. \citet{xiao2021tackling} consider %
the use of implicit models~\citep{huszar2017variational,mohamed2016learning,tran2017hierarchical,yin2018semi,li2018implicit} to boost the efficiency of diffusion models, where they deploy implicit models in each denoising step, which has higher difficulty in the training as the number of diffusion steps increases. \citet{pandey2022diffusevae} build diffusion models on top of the output of VAEs for refinement. Our work is also related if viewing TDPM as a diffusion model on top of an implicit model, where the implicit model can be parameterized {with the U-Net or a separate network.} %

\section{Discussion}\label{appendix:discussion}
\textbf{Potential societal impacts}: 
This paper proposes truncated diffusion probabilistic model as a novel type of diffusion-based generative model. The truncated part can be trained as implicit generative models such as GANs jointly or independently with the diffusion part. The capacities of truncated diffusion probabilistic models are competitive to existing diffusion-based ones and efficiency is largely improved. On the contrary of these positive effects, some negative perspectives could also be seen, depending on how the models are used. One major concern is the truncated diffusion technique proposed in this paper could potentially be a way to hack the existing diffusion models if the implicit models are maliciously used to fit the intermediate steps. For example, for some existing diffusion models, for safety concerns, the model's capacity to generate private data needs to be locked by hiding the diffusion ending point into an unknown distribution. The technique of TDPM could be used to crack these existing online diffusion models by providing intermediate noisy images or fine-tuning the first few steps with TDPM to unlock the capacity. Besides, the capacity of generating good images can also be misused to generate ill-intentioned images at a much lower cost.

\textbf{Discussions}: 
In this work, we mainly focus on reducing the length of the diffusion chain of a finite-time diffusion model. Our model has shown its effectiveness in improving finite-time diffusion models and it is non-trivial to further explore our model on continuous-time diffusion models~\citep{song2021scorebased}. Moreover, while in this paper DDPM is the primary baseline, TDPM can also be built on other recent diffusion models. While $p_\theta(\rvx_{T_\text{trunc}})$ is parameterized as an implicit distribution, it can also be formulated as a semi-implicit distribution~\citep{yin2018semi}, which allows it to be approximated with a Gaussian generator. \citet{xiao2021tackling} also present a closely related work. While we share the same spirit to reduce the length of the diffusion chain, these two strategies are not conflicting with each other. In future work we will look into the integration of these different strategies. {There also exists plenty of options in approximating $p_\theta(\rvx_{T_\text{trunc}})$. When truncating the diffusion chain to be short, the implicit distribution still faces multi-modal and needs to fit with different methods depending upon the properties that we need. For example, in order to capture all modes, a VAE would be preferred, like done in \citet{pandey2022diffusevae}. Below we provide an alternative method proposed in \citet{zheng2021exploiting} to fit the truncated distribution. Besides the training, it’s also an open question whether TDPM can be incorporated into more advanced architectures to have further improvements and we leave this exploration for future work. 
}

\section{Algorithm details and complementary Results}\label{appendix:results}
Below we provide additional algorithm details and complementary experimental results. 

\subsection{Additional analysis on the parameterization of the implicit generator}

As shown in Section~\ref{sec:method}, in general, the objective of TDPM consists of the training of the diffusion model $\epsilon_\theta$ (a U-Net architecture~\citep{ronneberger2015u}) with simple loss of DDPM $\gL_\mathrm{simple}$ and the training of an implicit prior model $G_\psi$ with objective $\gL_{T_\text{trunc}}^\mathrm{GAN}$. Without loss of generality, in our main paper, we show two configurations  to parameterize the implicit part for $t={T_\text{trunc}}$: 1) the implicit generator shares the same U-Net architecture used for  $0<t<{T_\text{trunc}}$; 2) the implicit generator is instantiated with a separate network. Below we explain this two configurations (denoted as TDPM+ in the main paper).

\textbf{Configuration 1)}: At $t=T_\text{trunc}$, the Unet generates the noisy image at the truncated step: $\rvx_{T_\text{trunc}} = \epsilon_\theta (\rvx_{T_\text{trunc} + 1}, t=T_\text{trunc} + 1)$, where $\rvx_{T_\text{trunc} + 1}\sim \mathcal N(0,I)$ is the pure noise image whose pixels are \textit{iid} sampled from  standard normal. For $t = T_\text{trunc}, T_\text{trunc}-1, \ldots,1$, the same Unet iteratively refines the noisy images by letting $\rvx_{t-1}= \frac{1}{\sqrt{\bar \alpha_t}}( \rvx_{t} - \frac{1-\alpha_t}{\sqrt{1-\bar \alpha_t}} \epsilon_{t-1}) + \beta_t \rvz_t; ~\rvz_{t>1} \sim \mathcal N(0,I), \rvz_1 = \vzero$, where  $\epsilon_{t-1} = \epsilon_\theta (\rvx_t, t)$ is the predicted noise by the Unet. 

Under this setting, the Unet-based generator plays two roles at the same time and the training will be more challenging than using two different generators here. However, we can also see as $T_\text{trunc}$ gets larger, the distribution of $p(\rvx_{T_\text{trunc} } )$ will become more similar to a noise distribution, and generating the noisy images will be more like generating noises. In this case, being able to generate both noisy images and predicting noise becomes easier for the generator. 

\textbf{Configuration 2) (TDPM+)}: 
Unlike previous configuration, where the implicit generator at step $t=T$ shares the same  U-Net architecture with $t < {T_\text{trunc}}$. Another way is to parameterize $G_\psi$ with a separate generator. Although this configuration increases the total parameter of the generative model, it allows the model has better flexibility in the training stage. For example, these two networks can be trained in parallel or leverage a pre-trained model. In our paper, we conduct the experiments by using Stylegan2 generator architecture~\cite{karras2020analyzing} for $t={T_\text{trunc}}$, resulting in an increase of 19M and 28M for the generator parameters when handling $32\times32$ and $256\times256$ images.

The process of training and sampling of these configurations are summarized in Algorithm~\ref{alg:training} and \ref{alg:sampling}.

\begin{figure}[ht]
\begin{minipage}[t]{0.495\textwidth}
\begin{algorithm}[H]
  \caption{Training} \label{alg:training}
  \small
  \begin{algorithmic}[1]
    \REPEAT
      \STATE $\rvx_0 \sim q(\rvx_0)$
      \STATE $t \sim \mathrm{Uniform}(\{1, \dotsc, T_\text{trunc}\})$
      \STATE $\epsilon_t \sim \mathcal{N}(\vzero, \mI), \rvz \sim \mathcal{N}(\vzero, \mI)$
      \STATE Update with \Eqref{eq:diffusion_gan_obj}
    \UNTIL{converged}
  \end{algorithmic}
\end{algorithm}
\end{minipage}
\hfill
\begin{minipage}[t]{0.495\textwidth}
\begin{algorithm}[H]
  \caption{Sampling} \label{alg:sampling}
  \small
  \begin{algorithmic}[1]
    \vspace{.04in}
    \STATE $\rvx_{T_\text{trunc}+1} \sim \mathcal{N}(\vzero, \mI)$
    \IF {$G_\psi$ shared with $\epsilon_\theta$} 
      \STATE $\rvx_{T_\text{trunc}} = \epsilon_\theta(\rvx_{T_\text{trunc}+1}, T_\text{trunc}+1)$ 
    \ELSE
      \STATE $\rvx_{T_\text{trunc}} = G_\psi(\rvx_{T_\text{trunc}+1})$ 
      \ENDIF
    \FOR{$t=T_\text{trunc}, \dotsc, 1$}
      \STATE $\rvz_t \sim \mathcal{N}(\vzero, \mI)$ if $t > 1$, else $\rvz_1 = \vzero$
      \STATE $\rvx_{t-1} = \frac{1}{\sqrt{\alpha_t}}\left( \rvx_t - \frac{1-\alpha_t}{\sqrt{1-\bar\alpha_t}} \epsilon_\theta(\rvx_t, t) \right) + \beta_t \rvz_t$
    \ENDFOR
    \STATE \textbf{return} $\rvx_0$
    \vspace{.04in}
  \end{algorithmic}
\end{algorithm}
\end{minipage}
\vspace{-1em}
\end{figure}

\subsection{Alternatives of learning the implicit distribution}
Another possible statistical distance is based on conditional transport~\citep{zheng2021exploiting}, which is proposed to balance the model-seeking and mode-covering behaviors when fitting an empirical data distribution.
In this setting, we use the same generator $G_\psi$ as before, but instead of a discriminator, we use a conditional distribution $\pi_\eta$ parameterized by $\eta$ to find an optimized mapping between the samples of $p$ and $q$, and a critic $\phi$ to measure the point-to-point cost $c_\phi$ in the feature space. The generator, the conditional distribution, and the critic are trained by the following objective $\gL_{T_\text{trunc}}^\mathrm{CT}$:
\ba{
\min_{\psi,\eta} &\max_\phi~\! \E_{\rvx \sim q(\rvx_{T_\text{trunc}})}\! \left[ \E_{G_\psi(\rvz) \sim \pi_\eta(G_\psi(\rvz)\given \rvx_{T_\text{trunc}})} c_\phi(\rvx_{T_\text{trunc}}, G_\psi(\rvz)) \right] \notag \\
&+ \E_{\rvz \sim p(\rvz)} \left[ \E_{\rvx_{T_\text{trunc}} \sim \pi_\eta(\rvx_{T_\text{trunc}}\given G_\psi(\rvz))} c_\phi(\rvx_{T_\text{trunc}}, G_\psi(\rvz)) \right]\!. \label{eq:CT_training}
}
Similar to \Eqref{eq:diffusion_gan_obj}, we fit TDPM-CT with following loss
\ba{
\gL_{\text{TDPM}}^\text{CT} = \gL_{\text{simple\_trunc}} + \lambda %
\gL_{T_\text{trunc}}^\text{CT} \label{eq:diffusion_ct_obj}
.
}
\textcolor{black}{We empirically find out this objective has no significant difference than using GAN objective shown in Equation \ref{eq:diffusion_gan_obj} in performance-wise as long as the generator is well trained.}

\subsection{Conditional truncated diffusion probabilistic models}
For conditional generation, we extend \Eqref{eq:diffusion_gan_obj} and derive a conditional version of TDPM: 
\ba{
&\gL_{\text{TDPM}}^\rvc = \gL_{\text{simple\_trunc}}^\rvc + 
\lambda %
\gL_{T_\text{trunc}}^\rvc
\label{eq:tdpm_obj_cond},
}
where $\gL_{\text{simple\_trunc}}^\rvc$ aims to train the conditional diffusion model with  
\begin{equation}
\gL_{\text{simple\_trunc}}^\rvc = \E_\rvc \E_{t,\rvx_0|\rvc,\epsilon_t}\left[ || \epsilon_t - \epsilon_{\theta}(\rvx_t, \rvc, t) ||^2 \right], ~~ t \sim \mbox{Unif}(1, 2,\ldots, T_\text{trunc} ), ~~\epsilon_t\sim \mathcal{N}(\vzero, \mI) \label{eq:simple_cond}, 
\end{equation}
and the truncated distribution $\gL_{T_\text{trunc}}^\rvc$ can be fitted with GAN or CT:
\ba{
\min_\psi \max_\phi~~~ & \E_\rvc \left[ \E_{\rvx \sim q(\rvx_{T_\text{trunc}}\given \rvc)}\! \left[\log D_\phi(\rvx\given\rvc) \right]  + \E_{\rvz \sim p(\rvz)} \left[ \log (1\!-\! D_\phi(G_\psi(\rvz, \rvc))\given \rvc) \right]\right]\!. \label{eq:gan_cond_training}
}
\ba{
\min_{\psi,\eta} \max_\phi~\! &\E_\rvc \big[ \E_{\rvx \sim q(\rvx_{T_\text{trunc}\given\rvc})}\! \left[ \E_{G_\psi(\rvz) \sim \pi_\eta(G_\psi(\rvz,\rvc)\given \rvx_{T_\text{trunc}}, \rvc)} c_\phi(\rvx_{T_\text{trunc}}, G_\psi(\rvz,\rvc)) \right] \notag \\
&+ \E_{\rvz \sim p(\rvz)} \left[ \E_{\rvx_{T_\text{trunc}} \sim \pi_\eta(\rvx_{T_\text{trunc}}\given G_\psi(\rvz,\rvc), \rvc)} c_\phi(\rvx_{T_\text{trunc}}, G_\psi(\rvz,\rvc)) \right]\big]\!. \label{eq:CT_cond_training}
}

\subsection{Analysis on toy experiments}\label{sec:toy experiments}
Although we present image experiments in the main paper, our studies were firstly justified our method on synthetic toy data as a proof of concept. We adopt representative 2D synthetic datasets used in prior works \citep{gulrajani2017improved,zheng2021exploiting}, including Swiss Roll, Double Moons, 8-modal, and 25-modal Gaussian mixtures with equal component weights. We use an empirical sample set $\mathcal X$, consisting of $|\mathcal X|=2,000$ samples and illustrate the generated samples after 5000 training epochs. We take 20 grids in the range $[-10,10]$ for both the $x$ and $y$ axes to approximate the empirical distribution of $\hat p_\theta$ and $\hat q$, and report the corresponding forward KL $D_{\text{KL}}(\hat q||\hat p_\theta)$ as the quantitative evaluation metric. 

{\Figref{fig:toy_swissroll} shows the results on the Swiss Roll data. We present a short chain with $T=2$ and a longer chain with $T=5$ to show the impacts of the number of diffusion steps. The first row shows that the data distribution is diffused with accumulated noise, and with more steps the diffused distribution will be closer to an isotropic Gaussian distribution. As one can see, truncating the diffusion chain to a short length will 
result in a clear gap between $q(\rvx_{T_\text{trunc}})$ and $\mathcal{N}(\vzero, \mI)$. When DDPM (shown in the second row) samples from the isotropic Gaussian distribution, it becomes hard to recover the original data distribution from pure noise with only a few steps. Although we can see DDPM can get slightly improved with a few more steps ($T=5$), as long as $q(\rvx_T)$ is not close to Gaussian, DDPM can hardly recover the data distribution. By contrast, as shown in the third and fourth rows, TDPM successfully approximates the non-Gaussian $q(\rvx_{T_\text{trunc}})$ with its implicit generator, and we can see the remaining part of the truncated chain is gradually recovered by the denoising steps. From both visualizations and $D_{\text{KL}}(\hat q||\hat p_\theta)$, we can see that TDPM is able to fit every step in such short chains.}

TDPM-GAN and TDPM-CT both succeed in fitting $p_\theta(\rvx_{T_\text{trunc}})$ but the latter one fits slightly better when the diffusion length is 2.
When the length increases to 5, fitting the implicit distribution with GAN becomes easier. 
This observation demonstrate a benefit of combining the diffusion models and GANs. 
If the implicit generator is sufficiently powerful to model $q(\rvx_{T_\text{trunc}})$, then the number of steps in need can be compressed to a small number. On the contrary, if the implicit generator cannot capture the distribution, we need more steps to facilitate the fitting of the data distribution. 

Shown in \Figref{fig:toy_8gaussians}-\Figref{fig:toy_moons}, we can see 8-modal Gaussian is more similar to an isotropic Gaussian after getting diffused, thus DDPM can recover a distribution similar to data with 5 steps. On 25-Gaussians, we can observe GAN does not suffer from mode-collapse and provide a better approximation than CT, which results in better data distribution recovery in the final step. %

\begin{figure*}[t]
    \centering
    \includegraphics[width=0.9\textwidth]{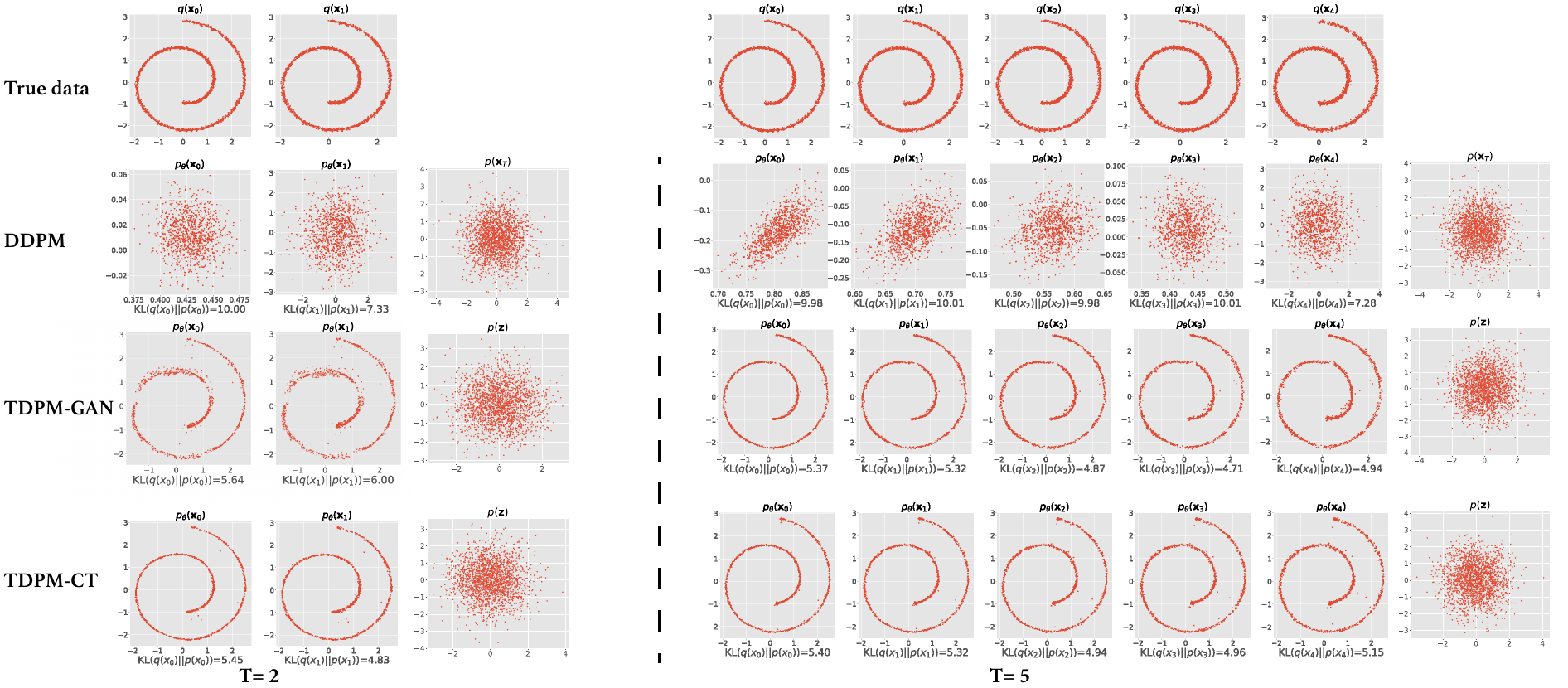}\vspace{-2mm}
    \caption{{\small A comparison of DDPM~\citep{ddpm}, TDPM-GAN, and TDPM-CT on Swiss Roll toy data. We show the effects of a truncated diffusion chain with length $T\!=\!2$ and $T\!=\!5$ ($T_\text{Trunc}\!\!=\!\!1$ and $T_\text{Trunc}\!\!=\!\!4$). The first row displays the true distribution from $q(\rvx_0)$ to $q(\rvx_{T-1})$. Each row below the first one represents the corresponding denoising distribution $p_\theta(\rvx_{t-1}\given \rvx_t)$. DDPM assumes $p(\rvx_T)\!=\!\mathcal{N}(\vzero, \mI)$ and we can observe a gap between the true data distribution $q(\rvx_{T-1})$ and its generative distribution $p_\theta(\rvx_{T-1})$. TDPM learns $p_\theta(\rvx_{T_\text{trunc}})$, which can be observed to well approximate the true $q(\rvx_{T-1})$, which helps the model successfully recover the clean data distribution $q(\rvx_0)$.
    Below each model, we report empirical KL divergence between data and generative distributions as the quantitative metric. More results on different toy data can be found in Appendix~\ref{appendix:results}.}}
    \label{fig:toy_swissroll}
    \vspace{-3mm}
\end{figure*}

\begin{figure*}[ht]
    \centering
    \includegraphics[width=\textwidth]{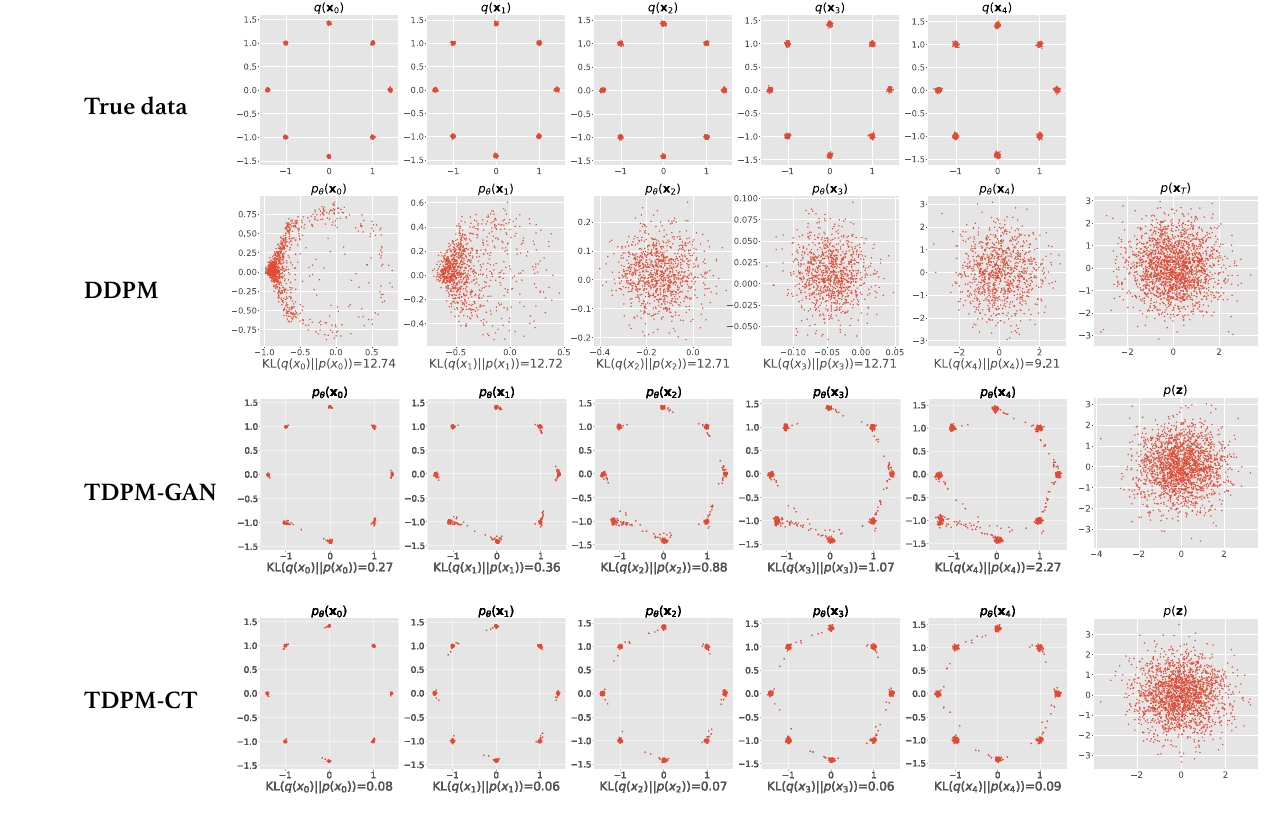}
    \caption{Analogous results to \Figref{fig:toy_swissroll} using 8-modal Gaussian data.}
    \label{fig:toy_8gaussians}
\end{figure*}

\begin{figure*}[t]
    \centering
    \includegraphics[width=\textwidth]{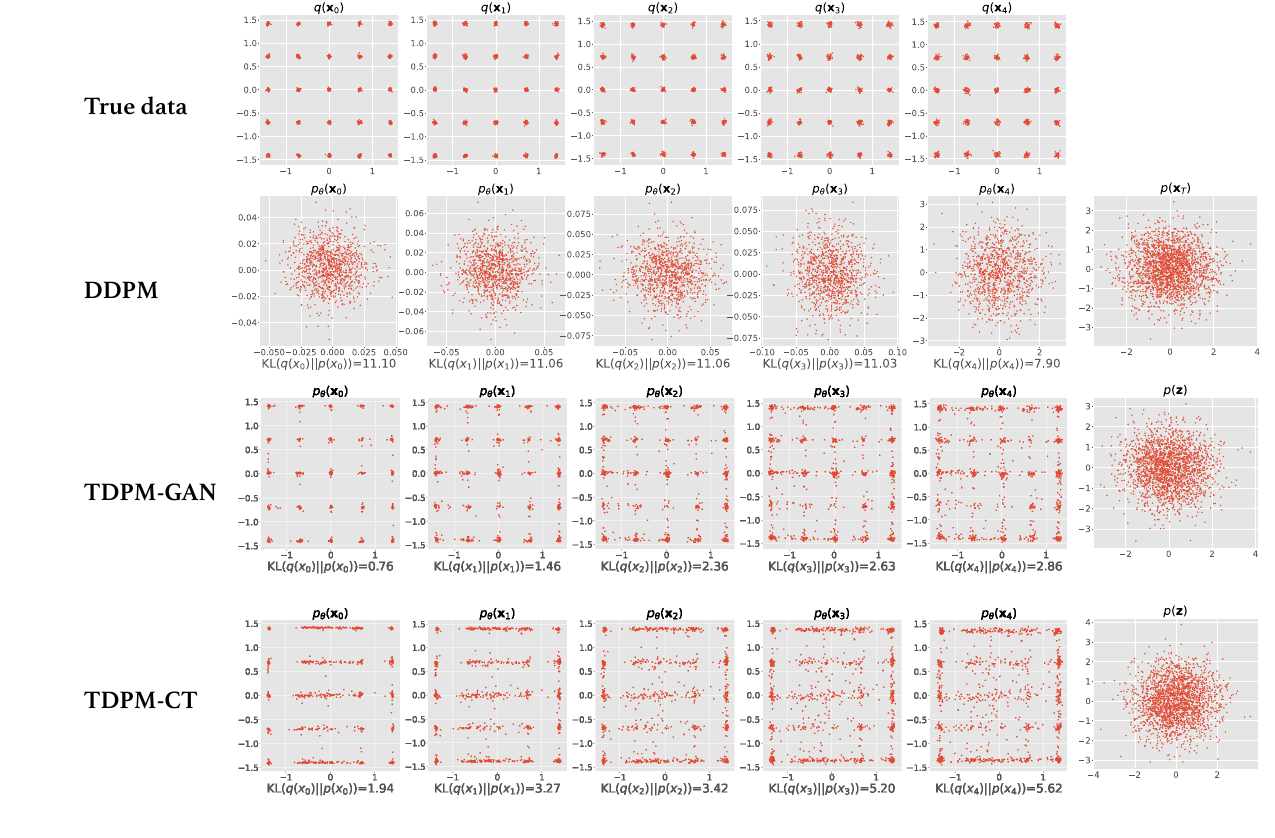}
    \caption{Analogous results to \Figref{fig:toy_swissroll} using 25-modal Gaussian data.}
    \label{fig:toy_25gaussians}
\end{figure*}

\begin{figure*}[t]
    \centering
    \includegraphics[width=\textwidth]{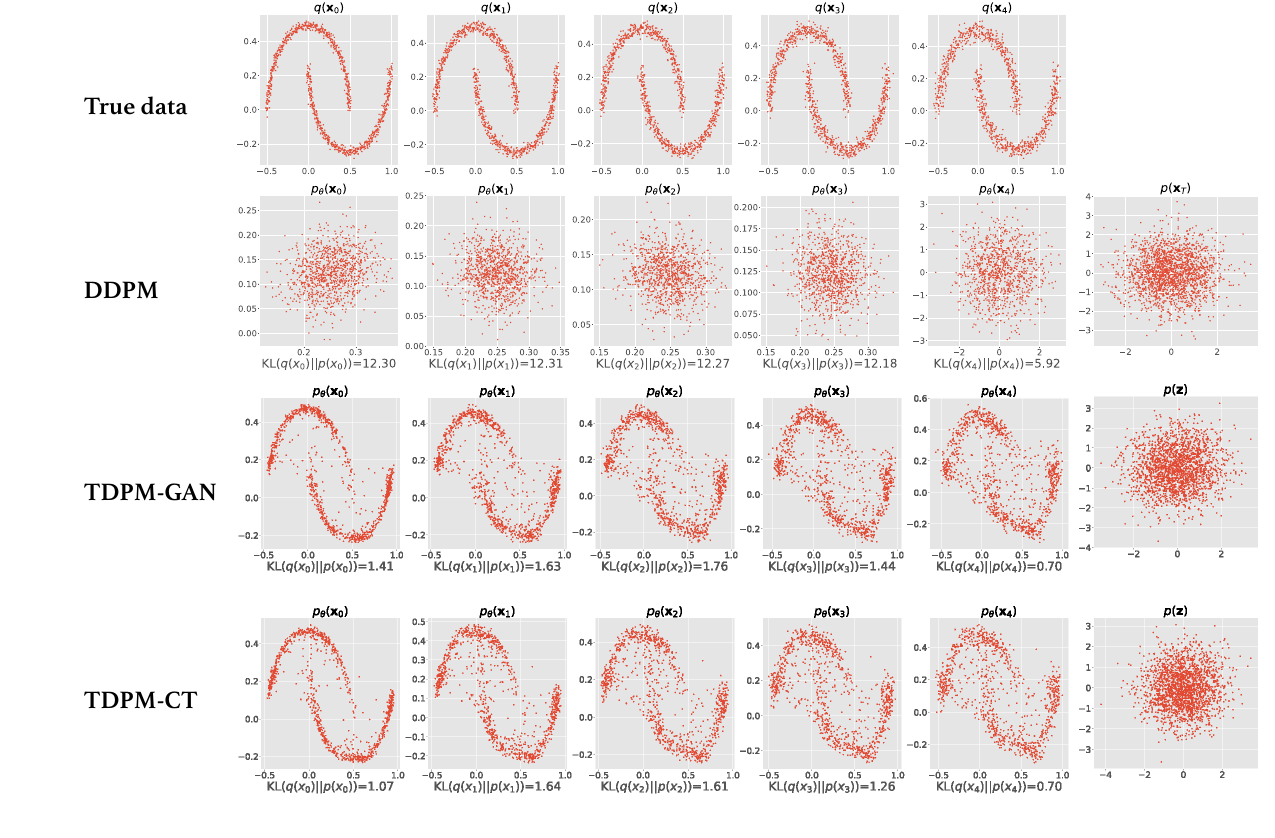}
    \caption{Analogous results to \Figref{fig:toy_swissroll} using Double Moons data.}
    \label{fig:toy_moons}
\end{figure*}

\clearpage
\subsection{Additional ablation studies}

\textbf{Using Pre-trained diffusion backbones:} Different from the default setting, here we put the implicit model of TDPM+ trained at $t={T_\text{trunc}}$ and a pre-trained DDPM model\footnote{The pre-trained checkpoints are provided by: \url{https://github.com/pesser/pytorch\_diffusion}} in the same pipeline of sampling. In this case we do not need to spend any time on pretraining the DDPM model, 
and only need to train the implicit model for $t={T_\text{trunc}}$.
As shown in Table~\ref{tab:Separate-implicit-model}, when combined with a pre-trained DDPM for $t<{T_\text{trunc}}$, the generation performance of TDPM trained under this two-step procedure is comparable to TDPM trained end-to-end. 
\begin{table}[ht]
\caption{{Results of adding a separately trained implicit generator to a pre-trained diffusion model on CIFAR-10.}}
\label{tab:Separate-implicit-model}
\centering
\renewcommand{\arraystretch}{1.}
    \setlength{\tabcolsep}{1.0mm}{ 
    \begin{tabular}{l|l|c}
   \toprule[1.5pt]
      {Model ($t=T_\text{Trunc}$)} & {Model ($t<T_\text{Trunc}$)} & {FID}$\downarrow$ \\
      \hline
      \multirow{4}{*}{{TDPM+ (T${_\text{trunc}}$=99)}} & {TDPM+ (DDPM backbone)} & {2.88}\\
       & {pre-trained DDPM}  & {2.85}\\ 
       & {TDPM+ (improved-DDPM backbone)}  & {2.83}\\ 
       & {pre-trained improved-DDPM}  & {2.25}\\ \hline
      \multirow{4}{*}{{TDPM+ (T${_\text{trunc}}$=49)}} & {TDPM+}  & {2.94}\\
       & {pre-trained DDPM}  & {3.05}\\ 
        & {TDPM+ (improved-DDPM backbone)}  & {2.96}\\ 
       & {pre-trained improved-DDPM}  & {2.60}\\ \hline
       \multirow{4}{*}{{TDPM+ (T${_\text{trunc}}$=4)}} & {TDPM+ (DDPM backbone)}   & {3.21}\\
        & {pre-trained DDPM}  & {3.25}\\ 
        & {TDPM+ (improved-DDPM backbone)}  & {3.17}\\ 
       & {pre-trained improved-DDPM}   & {2.95}\\
\bottomrule[1.5pt]
 \end{tabular}}\vspace{-3mm}
\end{table}

\textbf{Sensitivity to noise schedule}:  \citet{nichol2021improved} show the noise schedule affects the training of DDPM. Here we examine if TDPM is sensitive to the choice of noise schedule. We compare the linear schedule with cosine schedule, which adds noise in a milder manner. The results on CIFAR-10 are reported in Table~\ref{tab:schedule}, which %
suggest that TDPM is not sensitive to the choice between these two schedules.

\begin{table}[ht]
\caption{\small Ablation study with different noise schedules on CIFAR-10. The number before and after ``/" denotes the FID using linear and cosine schedules, respectively.}
\label{tab:schedule}
\centering
\renewcommand{\arraystretch}{1.}
    \setlength{\tabcolsep}{1.0mm}{ 
    \begin{tabular}{l@{\hskip .25in}l@{\hskip .25in}c}
   \toprule[1.5pt]
      Model & Steps  & FID$\downarrow$ (linear / cosine)\\
      \midrule
      TDPM-GAN& T${_\text{trunc}}$=99  & 3.10 / 3.47\\
      TDPM-GAN& T${_\text{trunc}}$=49  & 3.30 / 3.16\\
      TDPM-CT& T${_\text{trunc}}$=99  & 3.69 / 3.62\\
      TDPM-CT& T${_\text{trunc}}$=49  & 3.97 / 3.24\\
\bottomrule[1.5pt]
 \end{tabular}}\vspace{-3mm}
\end{table}

\textbf{On the choice of truncated step}: 
{
As the diffused distribution could facilitate the learning of the implicit generator $G_\psi$~\citep{arjovsky2017towards}, where we could observe by increasing the number of diffusion steps, the FID of TDPM consistently gets better. A natural question is on which step should we truncate the diffusion chain. We study the signal-to-noise ratio (SNR) of different diffusion step. Based on $q(\rvx_t|\rvx_0) = \mathcal{N}(\sqrt{\bar \alpha_t } \rvx_0, {1-\bar \alpha_t} I)$, we calculate SNR as
$$\text{SNR} = \frac{\sqrt{\bar \alpha_t } }{\sqrt{1-\bar \alpha_t}}; \bar \alpha_t = \prod_{i=1}^t (1 - \beta_t).$$
We visualize the SNR evolution across time step $t > 0$ in Figure~\ref{fig:snr}, where we can observe the SNR rapidly decays in the first 100 steps. According to previous studies in \citet{arjovsky2017towards}, injecting noise into the data distribution could smoothen the data distribution support and facilitate the GAN training. The SNR change in this interval indicates injecting noise in the level of $t\in \llbracket 1,100\rrbracket$ could bring in more significant improvement for the GAN training. When the step is greater than 200, the SNR is change is no longer significant and close to zero, which indicates the implicit model might not be too informative, though it is easier to train. Our experimental observations in \Figref{fig:iter_time}  also justify this conclusion: when training a GAN at $T_\text{Trunc}=4$, the required number of iterations is similar to training it on clean data; by training the GAN model at $T_\text{Trunc}=99$, the training of GAN is significantly facilitated. For $T_\text{Trunc}>100$, we empirically examine to train a GAN and find it would converge faster than training the diffusion model for $t < T_\text{Trunc}$. 
}

\textbf{Comparison of model efficiency}: 
{
In complement of the results in Table~\ref{tab:cifar}-\ref{tab:lsun church 256}, we provide detailed model size and generation time on v100 GPU. The results are summarized in Table~\ref{tab:parameter_time}. We can see TDPM has an increasing in the total number of parameter, as it involves a discriminator to help train the implicit model, while its sampling efficiency is also obvious.
}

\begin{table}[ht]
\caption{{ Comparison of model size (the added parameters corresponds to the discriminator model in the training but not involved in the generation), and GPU time in generation.}}
\label{tab:parameter_time}
\centering
\renewcommand{\arraystretch}{1.}
\setlength{\tabcolsep}{1.0mm}{ 
\resizebox{.95\columnwidth}{!}{
\begin{tabular}{l|cc|cc|cc}
\toprule[1.5pt]
Resolution & \multicolumn{2}{c|}{32$\times$32}  & \multicolumn{2}{c|}{64$\times$64}  & \multicolumn{2}{c}{256$\times$256} \\ \hline
Model      & Parameter & Time (s/image) & Parameter & Time (s/image) & Parameter  & Time (s/image) \\ \hline
DDPM       & 36M       & 31.03          & 79M       & 33.01          & 114M       & 62.93          \\
TDPM, T$_\text{trunc}$=99 & 36M+20M       & 3.13           & 79M+21M      & 3.52           & 114M+24M       & 6.65           \\
TDPM, T$_\text{trunc}$=49 & 36M+20M       & 1.52           & 79M+21M      & 1.55           & 114M+24M       & 1.88           \\
TDPM, T$_\text{trunc}$=4  & 36M+20M       & 0.16           & 79M+21M      & 0.26           & 114M+24M       & 0.65           \\
TDPM, T$_\text{trunc}$=0  & 36M+20M       & 0.03           & 79M+21M      & 0.05           & 114M+24M       & 0.14          \\
\bottomrule[1.5pt]
 \end{tabular}}\vspace{-5mm}
}
\end{table}

\begin{figure*}[ht]
    \centering
    \includegraphics[width=.85\textwidth]{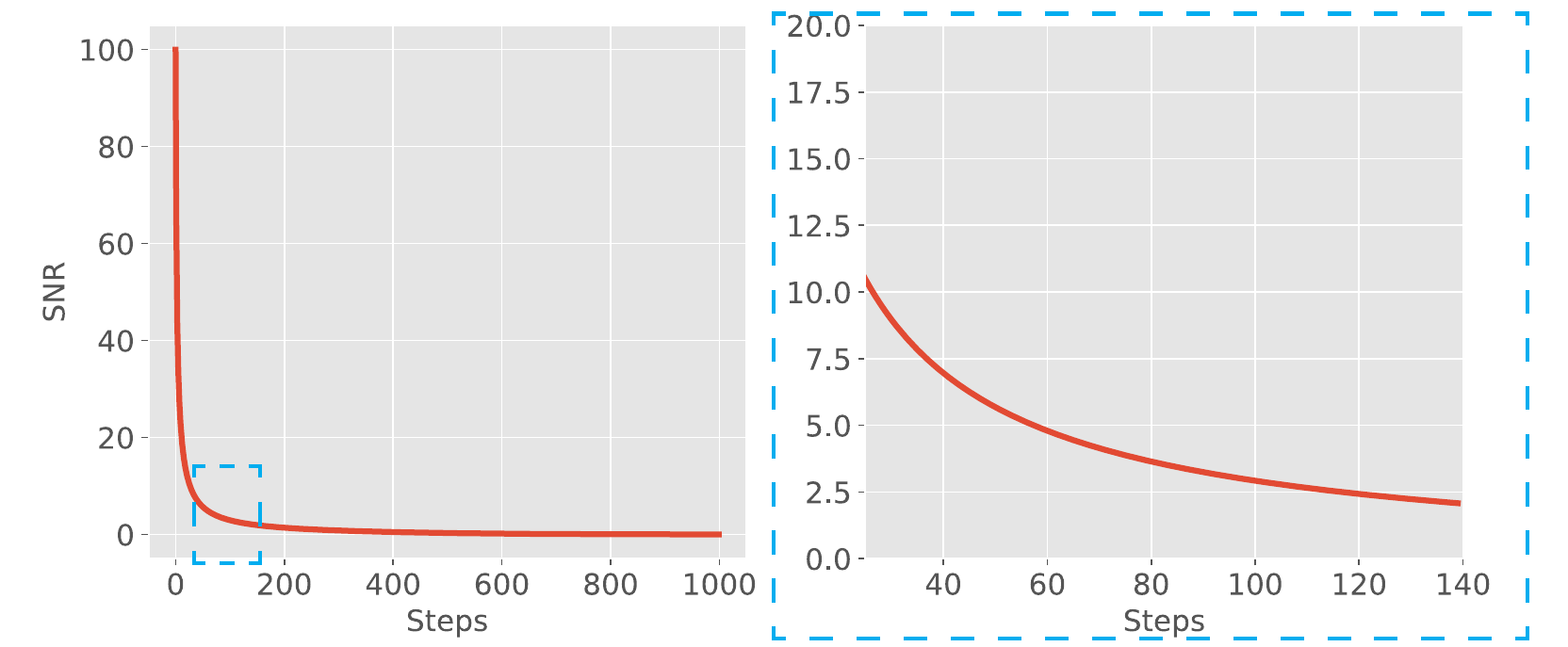}
    \caption{{Signal-to-noise ratio evolution across different diffuse step $T$. The right sub-panel shows a zoomed-in SNR evolution in the range of [30, 140] steps. }}
    \label{fig:snr}\vspace{-3mm}
\end{figure*}

\subsection{Experimental Settings}\label{appendix:experiments}

\subsubsection{Model architecture}
\textbf{Generator: }Our generator structure strictly follows the U-Net structure \citep{ronneberger2015u} used in DDPM, improved DDPM, and ADM~\citep{ddpm,nichol2021improved,Dhariwal2021DiffusionMB}, which consists of multiple ResNet blocks \citep{he2016deep} with Attention blocks \citep{vaswani2017attention} injected in the bottleneck. Please refer to these paper for more details on the architecture.

A key difference between our model and previous diffusion models is that our model also train such U-Net as an extra implicit generator $G_\theta$ that takes a latent variable $\rvz \sim \mathcal{N}(\vzero, \mI)$ and a fixed time index $t=T_\text{trunc}+1$ as input. However, this does not result in a difference in the generator architecture. We parameterize $G_\theta$ with the same U-Net architecture for simplicity and the time embedding $t={T_\text{trunc}}+1$ is specified to be trained with the implicit loss shown in \Eqref{eq:gan_training} and \Eqref{eq:CT_training}. We have also tested to use all zero time embedding for $t={T_\text{trunc}+1}$ and found no clear differences. 

For our results of TDPM+, the generator $G_\psi$ specifically takes a StyleGAN2 architecture~\cite{karras2020analyzing} and there is no time-embedding in $G_\psi$. An increase of generator parameter appears caused by separating the implicit model and denoising U-Net. Note that the generator is trained with GAN loss and without specially designed adaptive augmentation in~\citet{karras2020training}. For the detailed model architecture please refer to the corresponding paper or their Github repository: \url{https://github.com/NVlabs/stylegan2-ada-pytorch}.

\textbf{Discriminator:} Similar to \citet{xiao2021tackling}, we adopt the discriminator architecture used in \citet{karras2020analyzing}, but without the time step input. The discriminator discriminate $\rvx_{{T_\text{trunc}}}$ is from the diffused distribution $q(\rvx_{T_\text{trunc}})$ or implicit generative distribution $p_\theta(\rvx_{{T_\text{trunc}}})$. Please refer to Appendix~C of \citet{xiao2021tackling} for the detailed design.

\textbf{Navigator:} Training with $\gL_{T_\text{trunc}}^\text{CT}$ involves an extra module named navigator~\citep{zheng2021exploiting}. We strictly follow the architecture used in~\citet{zheng2021exploiting}, where the navigator is an MLP taking the pairwise feature distance as inputs. There is no time embedding used in the navigator as it is only used for the training at $t = T_\text{Trunc}$. The feature is extracted from the layer before the final scalar output. Please refer to their Appendix D for detailed information.

\textbf{Architecture for text-to-image experiments:} We adopt the 1.45B LDM model~\citep{rombach2022high} that is pretrained on LAION-400M dataset~\citep{laion}. The LDM model consists of a U-Net KL-regularized autoencoder with downsampling-factor 8 (resolution 256 -> 32), a U-Net in the latent space, and a BERT~\citep{devlin2018bert} text encoder transform raw text to a sequence of 1280-dimension embeddings. We only fine-tune the latent model in our experiments. 
In the training of the truncated part, the discriminator takes the first-half of the U-Net (downsampling backbone) with a linear predicting head on top of it.

\textbf{Architecture for toy experiments:} The generator uses an architecture stacked with 4 linear layers with 128 hidden units. Each intermediate layer is equipped with a time-embedding layer and follows softplus activation. The discriminator and navigator have the same architecture, without time-embedding layers, and using leakyReLU as the activation function.
 
\subsubsection{Training configurations}
\textbf{Datasets:} We use CIFAR-10 \citep{cifar10}, LSUN-bedroom, and LSUN-Church \citep{lsun} datasets for unconditional generation in the main experiments. Additionally, we apply CelebA\citep{celeba} and CelebA-HQ~\citep{CelebAMask-HQ} for complementary justification. For text-to-image experiments, we use CUB-200~\citep{cub} and MS-COCO~\citep{mscoco}. The images consist of $32 \times 32$ pixels for CIFAR-10. For the other datasets, we apply center-crop along the short edge and resize to the target resolution ($64\times64$ for CelebA; $256 \times 256$ for the others). 

\textbf{Diffusion schedule:} For all datasets, we strictly follow the diffusion process used in our backbone models, and instantiate the truncated diffusion schedule by obtaining the first $T_\text{Trunc}$ diffusion rates $\{\beta_1, ..., \beta_{T_\text{Trunc}}\}$. For example, if our goal is to fit a model with NFE=50, to truncate the diffusion process used in \citet{ddpm} ($\beta_1=10^{-4}$, $\beta_{T}=0.02$, T=1000), we first initialize $\beta_1$, $\beta_2$, ... $\beta_{1000}$, and then taking the first 49 steps to complete the truncation. 

\textbf{Optimization:} We train our models using the Adam optimizer \citep{adam}, where most of the hyperparameters match the setting in \citet{xiao2021tackling}, and we slightly modify the generator learning rate to match the setting in~\citet{ddpm}, as shown in Table \ref{tab:optimizer}. 

We train our models using V100 GPUs, with CUDA 10.1, PyTorch 1.7.1. The training takes approximately 2 days on CIFAR-10 with 4 GPUs, and a week on CelebA-HQ and LSUN-Church with 8 GPUs.

\begin{table}[h]
\vspace{-5mm}
\centering
\caption{Optimization hyper-parameters.}\vspace{-3mm}
\label{tab:optimizer}
\resizebox{\columnwidth}{!}{
\begin{tabular}{lcccc}
\toprule
& CIFAR10 & CelebA & CelebA-HQ & LSUN  
\\ \midrule
Initial learning rate for discriminator &$10^{-4}$& $10^{-4}$ & $10^{-4}$ & $10^{-4}$\\
Initial learning rate for navigator (if applicable) &$10^{-4}$& $10^{-4}$ & $10^{-4}$ & $10^{-4}$\\
Initial learning rate for generator &$1\times10^{-5}$& $1\times10^{-5}$ & $2\times10^{-5}$ & $2\times10^{-5}$\\
Adam optimizer $\beta_1$ &0.5&0.5& 0.5& 0.5\\
Adam optimizer $\beta_2$ &0.9&0.9&0.9&0.9\\
EMA &0.9999&0.9999&0.9999 &0.9999\\
Batch size &128&128 &64&64\\
$\#$ of training iterations &800k &800k &0.5M&2.4M(bedroom)/1.2M(church)\\
$\#$ of GPUs &4&8&8&8\\
\bottomrule
\end{tabular}
}\vspace{-3mm}
\end{table}

For TDPM+, where we use StyleGAN2 generator as $G_\psi$, we directly use their original training hyper-parameters and train the model in parallel with the diffusion model. For TLDM, we set the base learning rate as $10^{-5}$ and the mini-batch size is set to 64.
{For the ImageNet1K-64$\times$64 experiments, we use StyleGAN-XL generator as $G_\psi$ and strictly follow all the default training hyper-parameters. %
To simplify the implementation and save computation, instead of applying the default progressive growing pipeline $16\times16 \rightarrow 32\times32 \rightarrow 64\times64$, we directly train the implicit model on 64$\times$64 images corrupted at $T_\text{Trunc}$. Without using the progressive growing pipeline, the result of StyleGAN-XL shown in Table~\ref{fig:imagenet} is clearly worse than the progressive one reported in their paper (FID 1.51). However, when used as the implicit model of TDPM, the final performance of TDPM becomes competitive with this result. 
}

\textbf{Evaluation:} When evaluating the sampling time, we use models trained on CIFAR-10 and generate a batch of 128 samples. When evaluating the  FID, and recall score, following the convention, we use 50k generated samples for CIFAR-10, LSUN-bedroom and LSUN-church, 30k samples for CelebA-HQ (since the CelebA HQ dataset contains only 30k samples), 30k samples for the text-to-image datasets. The recall scores are calculated with the recipe in ~\citet{kynkaanniemi2019improved}. In the sampling stage, we follow our backbone to apply the same guidance in the diffusion part ($t<T_\text{Trunc}$) if applicable. Specifically, for LDM backbone, we use classifier-free guidance~\citep{ho2022classifier} with scale 1.5 and there are no DDIM steps for TDLM. 

\subsection{Additional results on unconditional generation}\label{appendix:additional unconditional results}
\begin{table}[ht]
\centering
\small 
\caption{\small {Full comparison of unconditional generation on CIFAR-10. Models are grouped by the orders of sampling steps, with the best FID, and Recall in each group marked in bold. The TDPM+ results are produced using ADM and StyleGAN2 backbone. TDPM with NFE=1 is equivalent to training a GAN with the DDPM architecture as the generator.}}
\label{tab:cifar_full}
\centering
\resizebox{0.7\columnwidth}{!}{
\begin{tabular}{lccc}
\toprule[1.5pt]
 Model & NFE $\downarrow$   & FID$\downarrow$ & Recall $\uparrow$ 
\\ \midrule
Improved DDPM \citep{nichol2021improved}&4000  &2.90& - \\
UDM \citep{kim2021score}&2000  &2.33& - \\
Likelihood SDE \citep{song2021maximum}&2000 &2.87& - \\
Score SDE (VE) \citep{song2021scorebased} &2000 & \textbf{2.20} &  0.59 \\
Score SDE (VP) \citep{song2021scorebased} &2000& 2.41& 0.59 \\ \hline
NCSN \citep{scorematching} &1000 & 25.3 &- \\
Adversarial DSM \citep{adversarialdiff} & 1000 & 6.10 & -  \\ 
VDM \citep{kingma2021variational} &1000 &4.00 &  \\
D3PMs \citep{austin2021structured} &1000 &7.34&-\\
DiffuseVAE \citep{pandey2022diffusevae}, T=1000 & 1000 & 8.72 & -  \\
DDPM \citep{ddpm} &1000   & 3.21& 0.57 \\
\midrule
Recovery EBM \citep{gao2021learning}&180  & 9.58 &- \\
Gotta Go Fast \citep{jolicoeur2021gotta}&180&{2.44}&-\\
LSGM \citep{vahdat2021score} &147   & \textbf{2.10}& \textbf{0.61} \\
Probability Flow (VP) \citep{song2021scorebased} &140  & 3.08& 0.57 \\
DiffuseVAE \citep{pandey2022diffusevae}, T=100  & 100  & 11.71 & - \\
\rowcolor{Gray}
TDPM, T$_\text{trunc}$=99 (ours) &100   & 2.97& 0.57\\
\rowcolor{Gray}
TDPM+, T$_\text{trunc}$=99 (ours) &100   & 2.83& 0.58\\
\midrule
FastDDPM, T=50 \citep{kong2021fast} & 50 & 3.41 & 0.56 \\
DDIM, T=50 \citep{ddim} & 50   & 4.67& 0.53 \\
SNGAN+DGflow \citep{ansari2021refining} & 25 & 9.62& 0.48 \\
\rowcolor{Gray}
TDPM, T$_\text{trunc}$=49 (ours) &50  & \textbf{3.11}& {0.57} \\
\rowcolor{Gray}
TDPM+, T$_\text{trunc}$=49 (ours) &50  & \textbf{2.96}& \textbf{0.58} \\
\midrule
{Progressive distiallation~\citep{salimans2022progressive}} &8 & \textbf{2.57}& - \\
{Denoising Diffusion GAN~\citep{xiao2021tackling}}, T=8 &8  & 4.36& {0.56} \\ \hdashline
{Progressive distiallation~\citep{salimans2022progressive}} &4  & \textbf{3.00}& - \\
{Denoising Diffusion GAN~\citep{xiao2021tackling}}, T=4 &4  & 3.75& \textbf{0.57} \\
\rowcolor{Gray}
TDPM, T$_\text{trunc}$=4 (ours) &5  & {3.51}& {0.55}\\
\rowcolor{Gray}
TDPM+, T$_\text{trunc}$=4 (ours) &5  & {3.17}& \textbf{0.57}\\\hdashline
{Progressive distiallation~\citep{salimans2022progressive}} &2 & 4.51& - \\
{Denoising Diffusion GAN~\citep{xiao2021tackling}}, T=2 &2 & 4.08& 0.54 \\
\rowcolor{Gray}
TDPM, T$_\text{trunc}$=1 (ours) &2  & {4.47} & {0.53} \\
\rowcolor{Gray}
TDPM+, T$_\text{trunc}$=1 (ours) &2  & \textbf{3.86} & \textbf{0.56} \\
\hdashline
DDPM Distillation \citep{luhman2021knowledge} &1   & 9.36& \textbf{0.51} \\
SNGAN \citep{miyato2018spectral} &1  & 21.7 & 0.44 \\ 
AutoGAN \citep{gong2019autogan} &1   & 12.4 &0.46 \\ 
TransGAN \citep{jiang2021transgan} &1 &9.26&-\\
StyleGAN2 w/o ADA \citep{karras2020training} &1   & 8.32& 0.41\\
StyleGAN2 w/ ADA \citep{karras2020training} &1  & \textbf{2.92}& 0.46\\
StyleGAN2 w/ Diffaug \citep{zhao2020differentiable} &1 & 5.79& 0.42 \\
{Progressive distiallation~\citep{salimans2022progressive}}&1& 9.12& -  \\
{Denoising Diffusion GAN~\citep{xiao2021tackling}}, T=1 &1  & 14.6& {0.19} \\
\rowcolor{Gray}
TDPM, T$_\text{trunc}$=0 (ours) &1  & 7.34 & 0.46\\
\bottomrule[1.5pt]
\end{tabular}
}
\vspace{-4mm}
\end{table}

\begin{figure*}[ht]
    \centering
    \includegraphics[width=.8\textwidth]{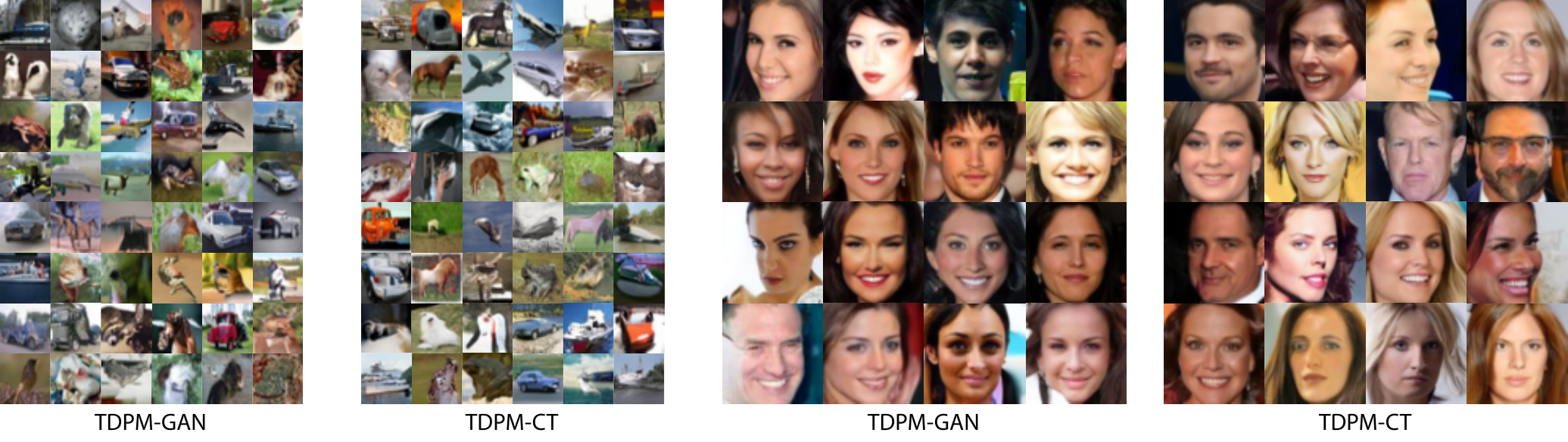}\vspace{-2.5mm}
    \caption{\small Qualitative results on CIFAR-10 and CelebA ($64\times 64$).}
    \label{fig:image_cifar10_celeba}\vspace{-2.5mm}
\end{figure*}

\begin{figure*}[t]
    \centering
    \includegraphics[width=\textwidth]{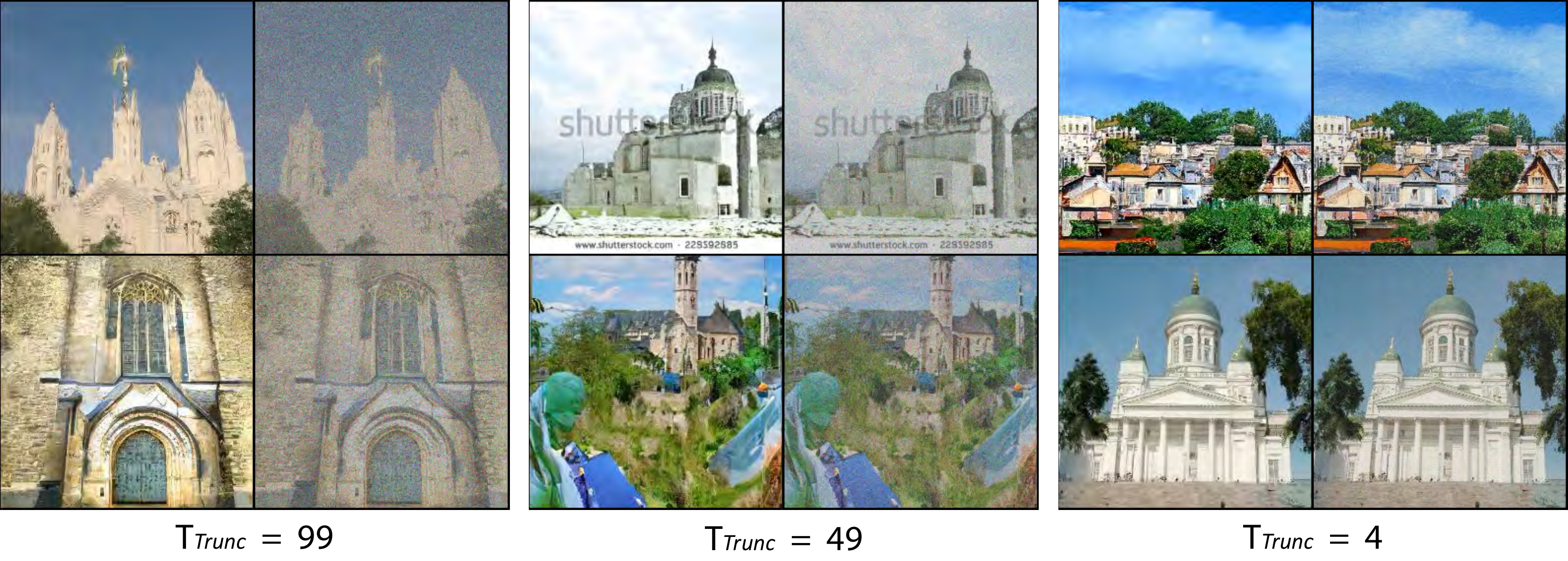}
    \caption{\small Qualitative results of TDPM on LSUN-Church %
    ($256\times 256$), with ${T_\text{trunc}}=99$, $49$, and $4$. Note $\mbox{NFE}={T_\text{trunc}}+1$ in TDPM. Each group presents generated samples from $p_\theta(\rvx_0)$ (left) and $p_\theta(\rvx_{T_\text{trunc}})$ (right).}
    \label{fig:image_church}
\end{figure*}
\begin{figure*}[t]
    \centering
    \includegraphics[width=\textwidth]{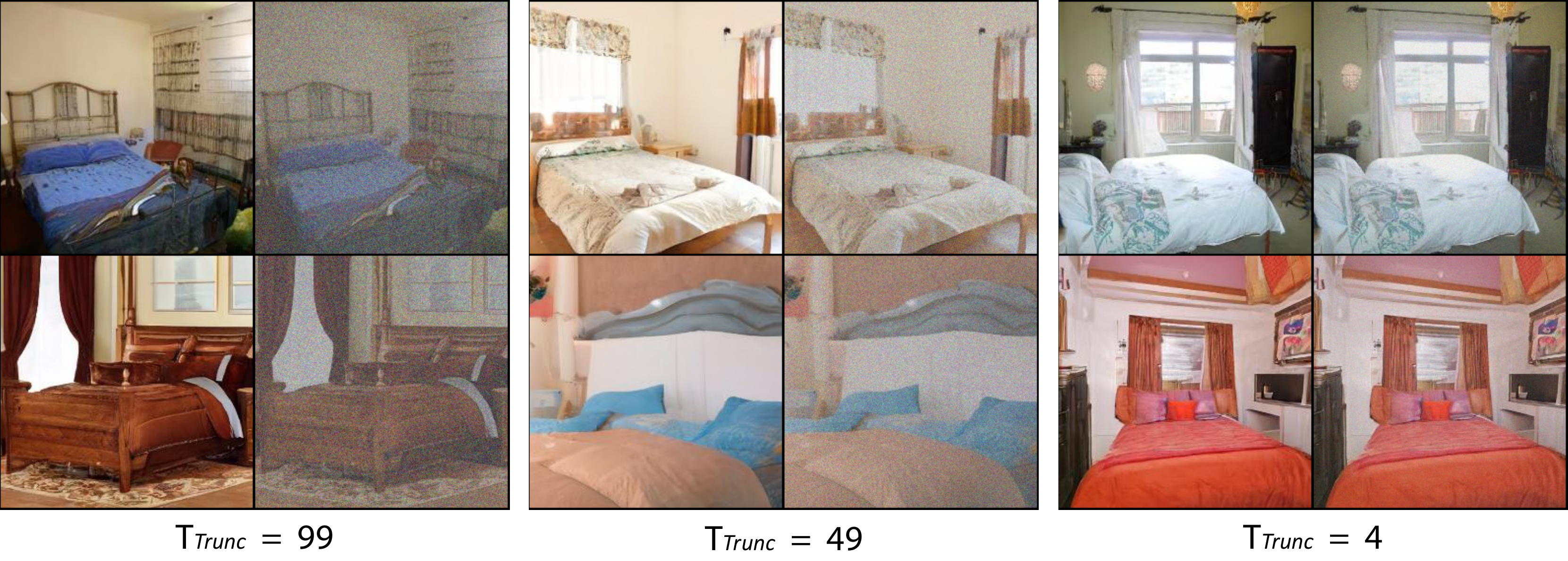}
    \caption{\small Analogous qualitative results to \Figref{fig:image_church} on LSUN-Bedroom. Produced by TDPM.}
    \label{fig:image_bedroom}
\end{figure*}

\begin{figure*}[ht]
    \centering
    \includegraphics[width=\textwidth]{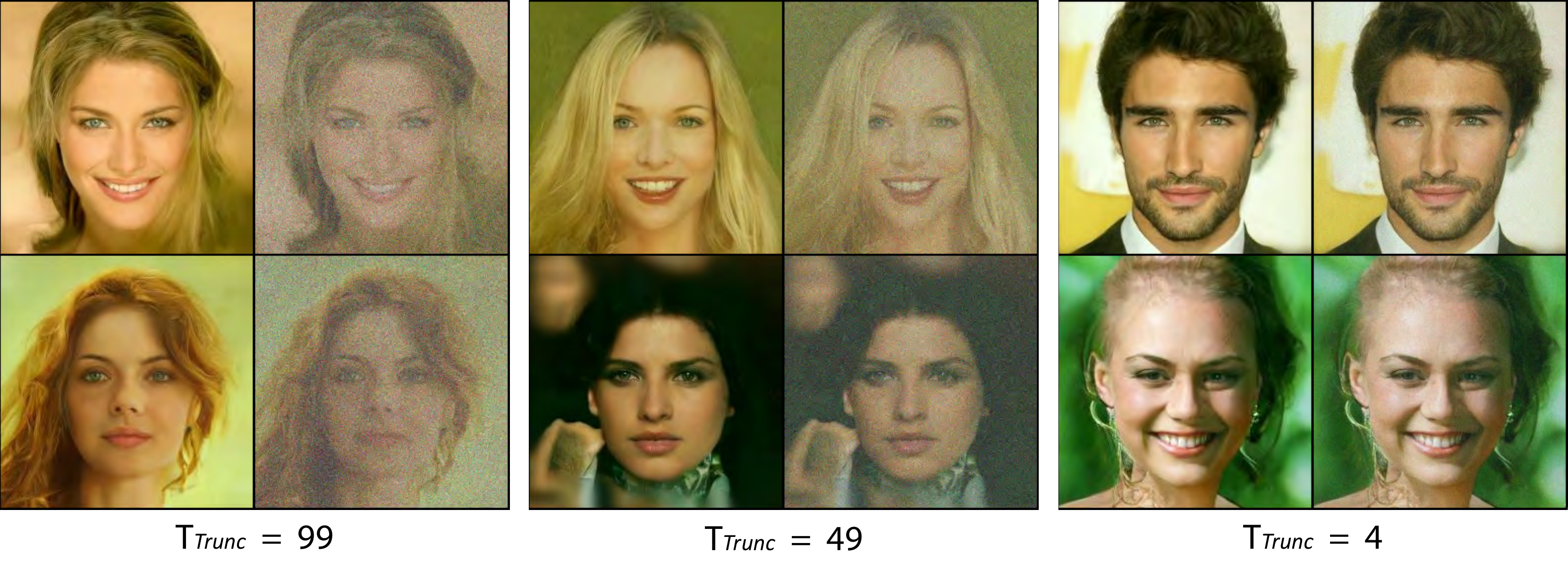}
    \caption{Analogous qualitative results to \Figref{fig:image_church} on CelebA-HQ. Produced by TDPM.}
    \label{fig:image_celebaHQ}
\end{figure*}

\begin{figure*}[ht]
    \centering
    \includegraphics[width=\textwidth]{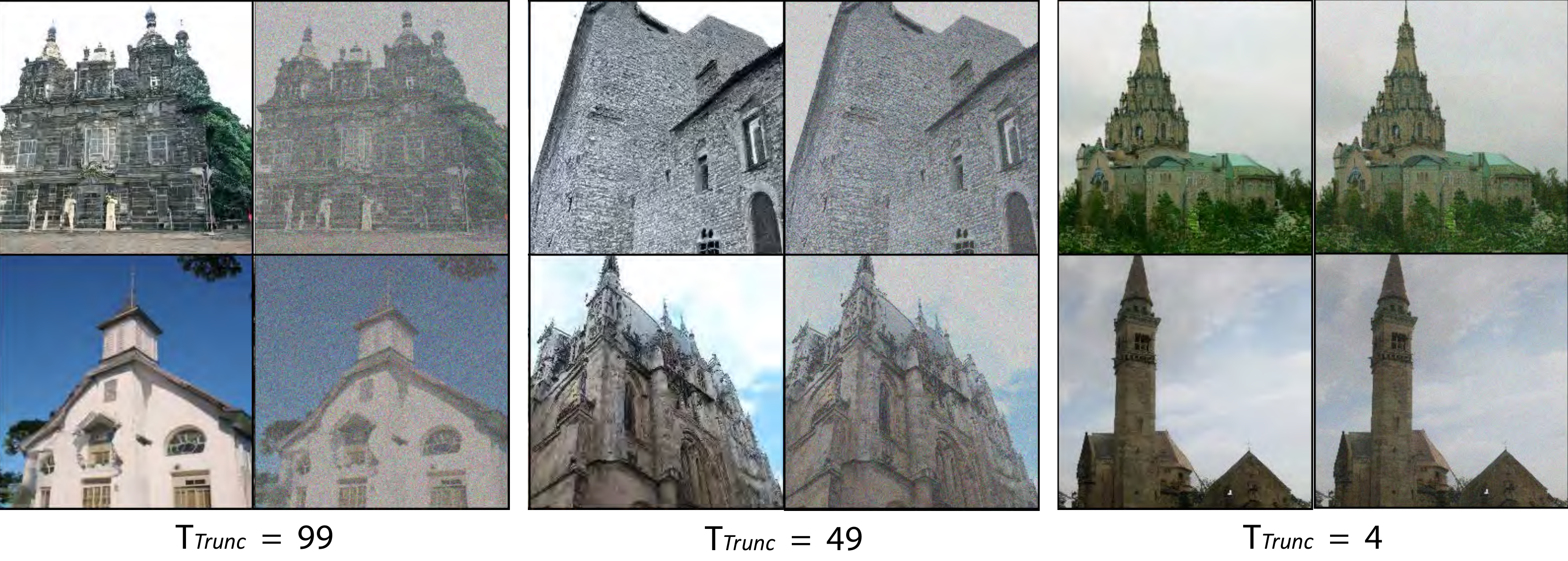}
    \caption{Analogous qualitative results to \Figref{fig:image_church} on LSUN-Church. Produced by TDPM-CT.}
    \label{fig:image_church_ct}\vspace{-5mm}
\end{figure*}

\begin{figure*}[ht]
    \centering
    \includegraphics[width=\textwidth]{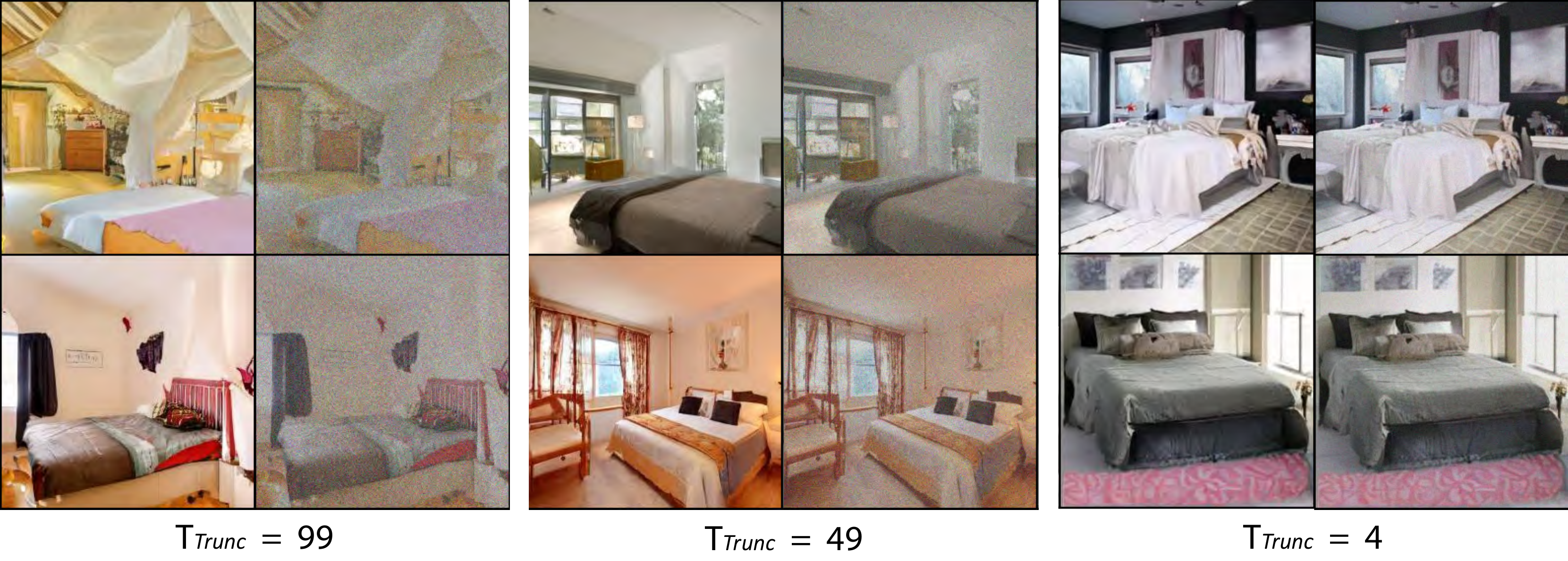}
    \caption{Analogous qualitative results to \Figref{fig:image_church} on LSUN-Bedroom. Produced by TDPM-CT.}
    \label{fig:image_bedroom_ct}
\end{figure*}

\begin{figure*}[ht]
    \centering
    \includegraphics[width=\textwidth]{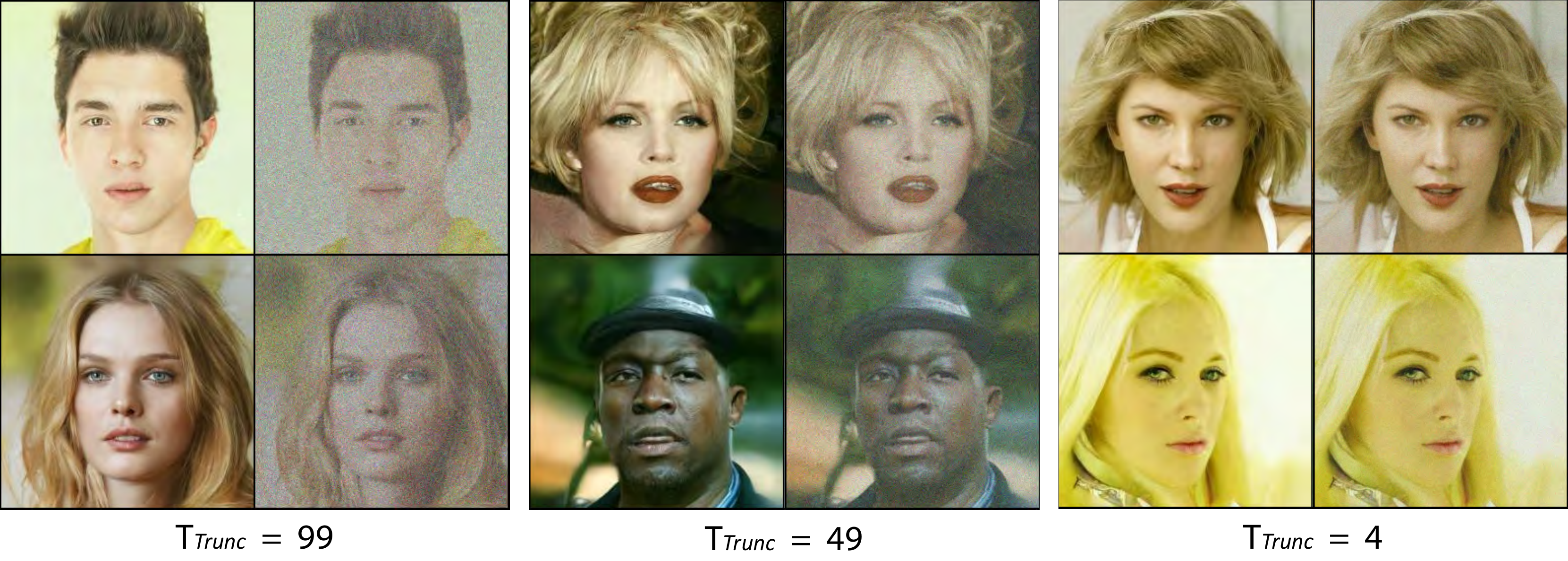}
    \caption{Analogous qualitative results to \Figref{fig:image_church} on CelebA-HQ. Produced by TDPM-CT.}
    \label{fig:image_celebaHQ_ct}\vspace{-5mm}
\end{figure*}
\clearpage

\subsection{Additional results on text-to-image generation}\label{appendix:additional results_textimage}

\begin{figure*}[ht]
    \centering
    \includegraphics[width=\textwidth]{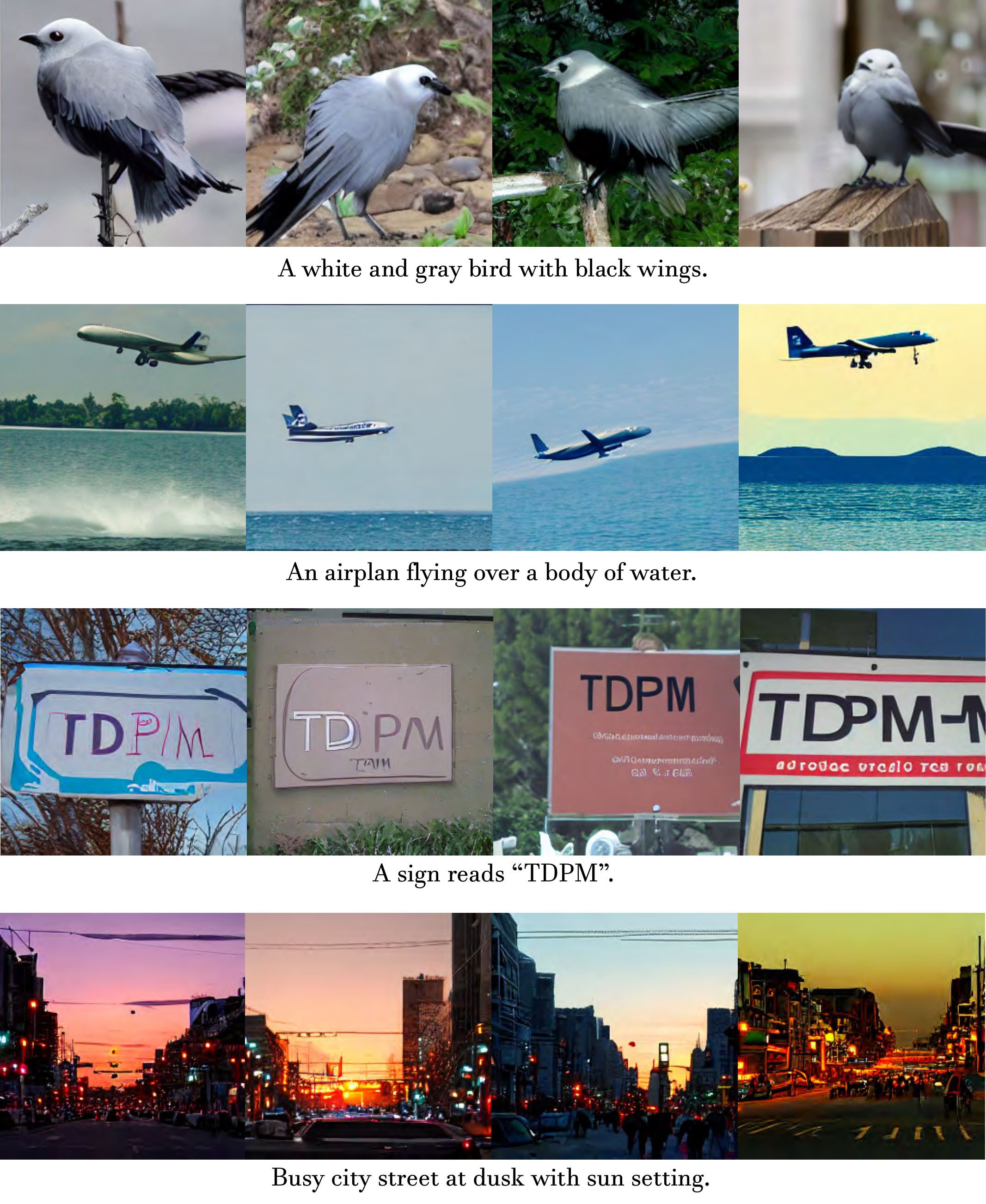}
    \caption{Additional text-to-image generation results with different text prompt, produced by TLDM with $T_\text{trunc}=49$.}
    \label{fig:txt-img-wide-50}\vspace{-5mm}
\end{figure*}

\begin{figure*}[ht]
    \centering
    \includegraphics[width=\textwidth]{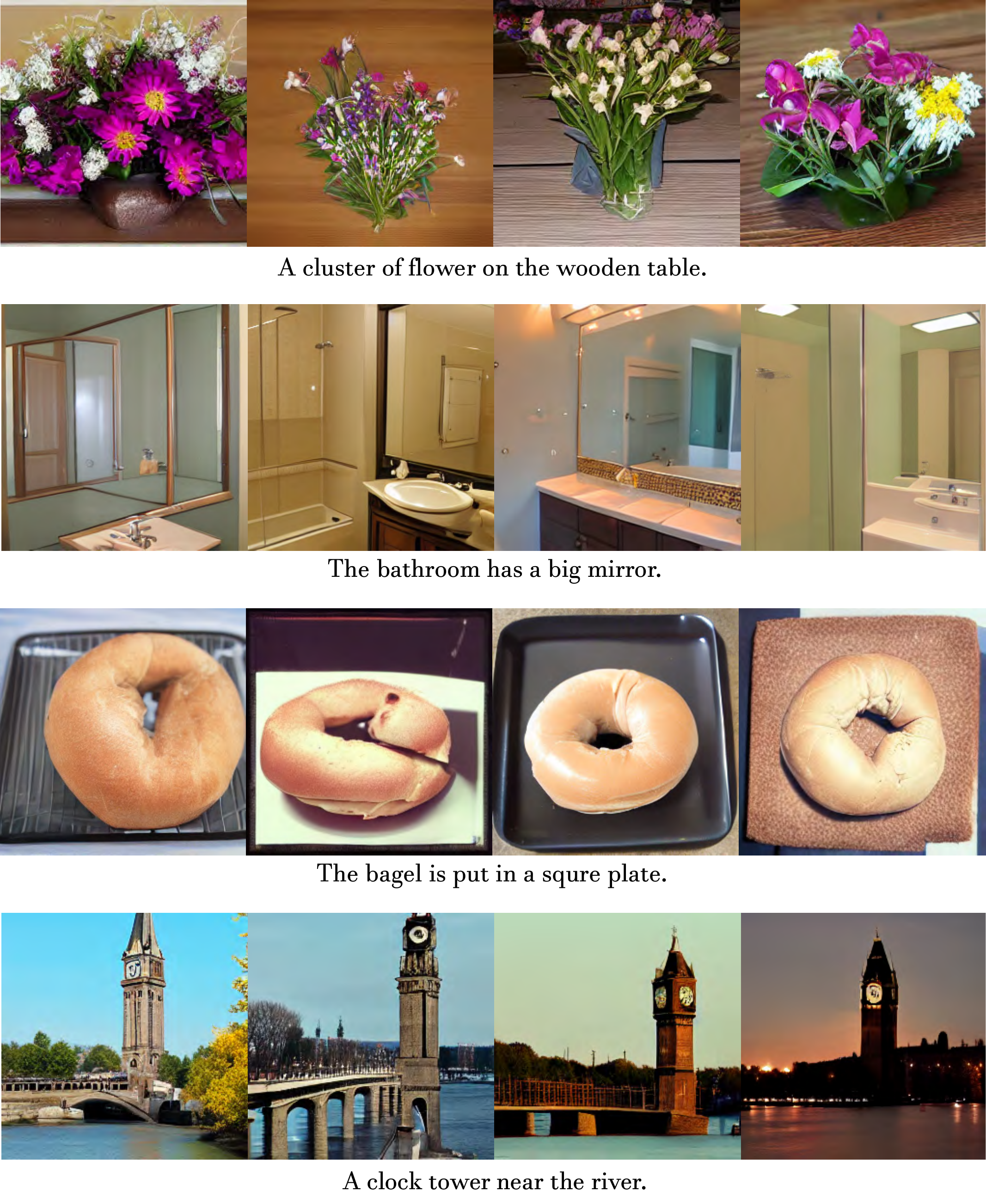}
    \caption{Additional text-to-image generation results with different text prompt, produced by TLDM with $T_\text{trunc}=4$.}
    \label{fig:txt-img-wide-5}\vspace{-5mm}
\end{figure*}

\end{document}